% RLC main.tex Version 2024.3

\documentclass[10pt]{article} % For LaTeX2e
% \usepackage{rlc}
% If accepted, instead use the following line for the camera-ready submission:
\usepackage[accepted]{rlc}
% To de-anonymize and remove mentions to RLC (for example, for posting to preprint servers), instead use the following:
% \usepackage[preprint]{rlc}

%%%%%%%%%%%%%%%%%%%%%%%%%%%%%%%%%%%%%%%%%%%%%%%%%%%%%%%%%%%%%%%%
%% Recommended (but not required) packages
%%%%%%%%%%%%%%%%%%%%%%%%%%%%%%%%%%%%%%%%%%%%%%%%%%%%%%%%%%%%%%%%
\usepackage{amssymb}            % Defines common symbols like \mathbb R
\usepackage{mathtools}          % Extends amsmath, providing common math tools
\usepackage{mathrsfs}           % Enables \mathscr, which can work in cases that \mathcal does not
\mathtoolsset{showonlyrefs}     % Only number equations that are referenced (optional)
\usepackage{graphicx}           % For including images
\usepackage{subcaption}         % Allows for the use of subfigures and subcaptions
\usepackage[space]{grffile}     % For spaces in image names
\usepackage{url}                % For displaying urls

\usepackage{hyperref}
\usepackage{url}
\usepackage[utf8]{inputenc} % allow utf-8 input
\usepackage[T1]{fontenc}    % use 8-bit T1 fonts
\usepackage{url}            % simple URL typesetting
\usepackage{booktabs}       % professional-quality tables
\usepackage{amsfonts}       % blackboard math symbols
\usepackage{nicefrac}       % compact symbols for 1/2, etc.
\usepackage{microtype}      % microtypography
\usepackage{xcolor}         % colors
\usepackage{graphicx}
\usepackage{float}
\usepackage{amssymb}
\usepackage{mathtools}
\usepackage{amsthm}
\usepackage{multirow}
\usepackage{subcaption}
\usepackage{wrapfig}

\usepackage{thmtools, thm-restate}
\usepackage{microtype}
\usepackage{graphicx}
\usepackage{booktabs} % for professional tables
\usepackage{amsmath}

\DeclareMathOperator*{\supp}{supp}
\DeclareMathOperator*{\dooperator}{do}

\DeclareMathOperator*{\EX}{\mathbb{E}}
\usepackage{thmtools, thm-restate}
\def\Pr{\mathop{\rm Pr}\nolimits}

\theoremstyle{plain}

\newtheorem{lemma}{Lemma}

\theoremstyle{definition}
\newtheorem{definition}{Definition}

\newtheorem{example}{Example}
\theoremstyle{remark}

\newenvironment{proofsketch}{%
  \proof}{\endproof}

%%%%%%%%%%%%%%%%%%%%%%%%%%%%%%%%%%%%%%%%%%%%%%%%%%%%%%%%%%%%%%%%
%% Title Page Specification
%%%%%%%%%%%%%%%%%%%%%%%%%%%%%%%%%%%%%%%%%%%%%%%%%%%%%%%%%%%%%%%%
\title{Bad Habits:
Policy Confounding and \\ Out-of-Trajectory Generalization in RL}

% Authors must not appear in the submitted version. They should be hidden
% as long as the tmlr package is used without the [accepted] or [preprint] options.
% Non-anonymous submissions will be rejected without review.

\author{Miguel Suau  \\
    \small{Phaidra} \\ 
    \small{Delft University of Technology} \\
    \small{miguel.suau@phaidra.ai}
    \And
    Matthijs T. J. Spaan \\
    \small{Delft University of Technology} \\
    \small{m.t.j.spaan@tudelft.nl}
    \And
    Frans A. Oliehoek \\
    \small{Delft University of Technology} \\
    \small{f.a.oliehoek@tudelft.nl} }

% The \author macro works with any number of authors. Use \AND 
% to separate the names and addresses of multiple authors.

%%%%%%%%%%%%%%%%%%%%%%%%%%%%%%%%%%%%%%%%%%%%%%%%%%%%%%%%%%%%%%%%
%% Begin document, create title and abstract
%%%%%%%%%%%%%%%%%%%%%%%%%%%%%%%%%%%%%%%%%%%%%%%%%%%%%%%%%%%%%%%%
\begin{document}

\maketitle

\begin{abstract}
Reinforcement learning agents tend to develop habits that are effective only under specific policies. Following an initial exploration phase where agents try out different actions, they eventually converge onto a particular policy. As this occurs, the distribution over state-action trajectories becomes narrower, leading agents to repeatedly experience the same transitions. This repetitive exposure fosters spurious correlations between certain observations and rewards. Agents may then pick up on these correlations and develop simplistic habits tailored to the specific set of trajectories dictated by their policy. The problem is that these habits may yield incorrect outcomes when agents are forced to deviate from their typical trajectories, prompted by changes in the environment. This paper presents a mathematical characterization of this phenomenon, termed policy confounding, and illustrates, through a series of examples, the circumstances under which it occurs.
\end{abstract}

\section{Introduction}
\begin{quote}
\centering
    \textit{This morning, I went to the kitchen for a coffee. When I arrived, \\ I forgot why I was there, so I got myself a coffee.---}
\end{quote}
\vspace{-3pt}
How often do you do something without paying attention to your actions? Have you ever caught yourself lost in thought while washing the dishes, making coffee, or cycling? Acting out of habit is a crucial human skill as it enables us to focus on more important matters while executing routine tasks. You can commute to work while thinking about how to persuade your boss to give you a salary raise or prepare dinner while daydreaming about your next holiday in the Alps. However, habits can also lead to unintended consequences when we fail to recognize that the context has changed. You might hop in your car and drive toward work even though it is a Sunday and you want to go to the grocery store, or you might flip the switch when leaving a room even though the lights are already off.

Surprisingly, reinforcement learning (RL) agents also struggle with this same issue. This is due to a phenomenon we term \emph{policy confounding}, which reflects how policies, as a result of influencing past and future observation variables, can inadvertently induce spurious correlations. These correlations can lead to the development of seemingly sensible but incorrect habits, such as flipping the switch upon leaving a room, without confirming whether the lights are on. The problem here is that these habits can produce incorrect results when agents are forced to deviate from their usual trajectories due to changes in the environment; a problem we refer to as \emph{out-of-trajectory} (OOT) generalization. 

% CATASTROPHIC FORGETTING
 \paragraph{Contributions} 
 This paper introduces and characterizes the phenomenon of \emph{policy confounding}. To do so, we provide a mathematical framework that helps us describe the different types of state representations, and reveal how, as a result of policy confounding, the agent may learn representations based on spurious correlations that do not guarantee OOT generalization. Moreover, we include a series of clarifying examples that illustrate how this occurs. Unfortunately, we do not have a complete answer for how to prevent policy confounding. However, we suggest a few off-the-shelf solutions that may help mitigate its effects. We hope this paper will create awareness among the RL community about the risks of policy confounding and inspire further research on this topic. 

 % we provide a series of clarifying examples that illustrate how, as a result of policy confounding,36
% representations based on spurious correlations may be equally effective as those that truly capture the37
% latent states. That is, so long as there are no trajectory deviations

 % Despite being critical for 
 % the practical deployment of this technology, the effects of policy confounding have gone largely unnoticed. 
% A few prior works have already reported empirical evidence of what can be considered particular forms of policy confounding (\citealp{machado2018revisiting, zhang2018study, zhang2018dissection, Song2020Observational, nikishin2022primacy, lan2023can}; refer to Appendix \ref{sec:related_work} for more details). All these use different hypotheses to justify their experimental observations, but
% and although some of them hint at the possibility of confounding HINT AT SPURIOUS CORRELATIONS AND OVERFITTING BUT NOT POLICY CONFOUNDING, 
% none of them provides a theory that explains why and how these observations come about. 

% MAKE CLEAR THAT THESE WORKS DO NOT EXPLAIN THAT THE SPURIOUS CORRELATIONS ARE INTRODUCED BY THE POLICY. 

% SONG ET AL EXPLAIN THAT THE ENVIRONMENT MAY SHOW SPURIOUS FEATURES THAT HAVE, IN PRINCIPLE, NOTHING TO DO WITH REWARD BUT THAT CAN BE USED AS CLUES FOR THE AGENT'S STATE. THIS IS ONLY ONE PARTICULAR INSTANCE OF POLICY CONFOUNDING. PROPOSITION 1.
\section{Example: Frozen T-Maze}\label{sec:example}
We now provide an example to illustrate the phenomenon and motivate the need for careful analysis. 
\begin{figure}
\vspace{-7pt}
     \centering
     \begin{subfigure}[b]{0.45\textwidth}
         \centering         \includegraphics[width=0.7\textwidth]{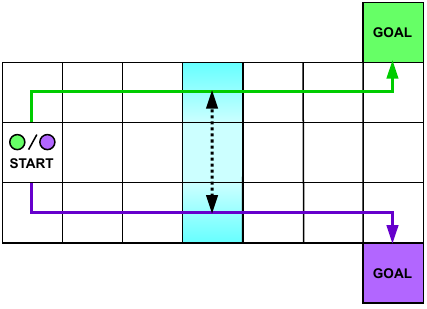}
         % \vspace{2pt}
     \end{subfigure}
     \hspace{15pt}
     \begin{subfigure}[b]{0.45\textwidth}
         \centering \includegraphics[width=0.9\textwidth]{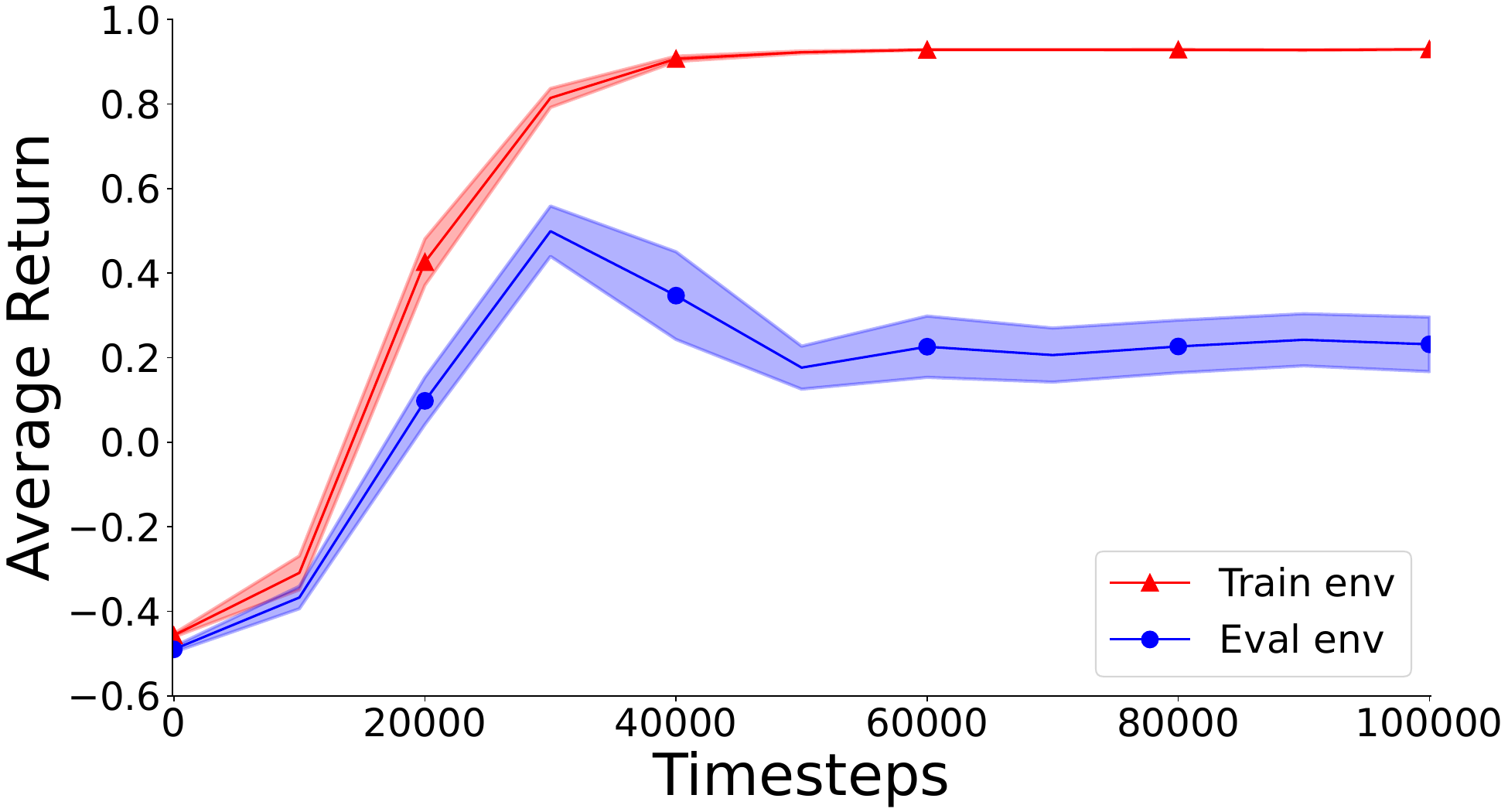}
     \end{subfigure}
     \vspace{-2pt}
      \caption{Left: An illustration of the Frozen T-Maze environment. Right: Learning curves when evaluated in the  Frozen T-Maze environment with (blue curve) and without (red curve) ice.}
      \label{fig:tmaze}
      \vspace{-10pt}
\end{figure}
Figure \ref{fig:tmaze} shows a variant of the popular T-Maze environment \citep{bakker2001reinforcement}. In this environment, the agent receives a binary signal, green or purple, at the starting location. The task for the agent is to navigate to the right and reach the correct goal at the end of the maze. The agent obtains a reward of $+1$ for moving to the green (purple) goal when having received the green (purple) signal and a penalty of $-1$ otherwise. Additionally, there is a $-0.1$ penalty per timestep to incentivize the agent to take the most direct path to the goal. At every timestep, the agent can observe its location within the maze, but the signal is only observed at the starting location. Importantly, the agent has the capability to remember past observations.

At first glance, it might seem crucial for the agent to remember the initial signal at every cell along its trajectory. After all, how else would it determine the correct goal? However, once the agent figures out the shortest path to each of the two goals (depicted by the green and purple arrows), the agent may safely forget the initial signal. The agent knows that whenever it is on the green (purple) path, it must have received the green (purple) signal. Hence, it can simply navigate toward the correct goal based solely on its current location. Sticking to this habit is optimal so long as the agent commits to these two paths. It is also essential that the environment dynamics remain the same as any changes in the agent's trajectories could erase the spurious correlation \citep{pearl2016causal} induced by the policy between the agent’s location and the correct goal.\footnote{Note that the two paths highlighted in Figure \ref{fig:tmaze} are not the only optimal paths. However, for the agent to be able to ignore the initial signal, the paths must not overlap.}

To demonstrate that this occurs in practice, we trained agents with PPO \citep{schulman2017proximal} in the original environment (train env) and evaluated them in a modified version (eval env), where there is an icy surface (depicted in blue) in the middle of the maze. The ice causes the agent to slip from the upper cell to the bottom cell and vice versa.\footnote{The ice compels the agent to take alternate trajectories by causing it to move down twice from the top cell or up twice from the bottom cell. Importantly, these trajectories are feasible within the original environment.} The plot on the right of Figure 1 shows the return averaged over 10 trials. The performance drop in the evaluation environment (blue curve) suggests that the agents' policies fail to generalize to alternative trajectories within the same environment. The ice confuses the agents, who, after being pushed away from their preferred trajectories, can no longer choose the correct goal. More details about this experiment are provided in Section \ref{sec:experiments}.

% Note that the performance does increase for the first 30K timesteps in the evaluation environment, suggesting that while the agents explore the environment, they still rely on the initial signal to select actions. It is only when the agents start following a fixed path towards the goal that the initial signal becomes redundant and the agents learn to ignore it. The agents forget the real reason why they are at a particular location and act through sheer force of habit. 

\section{Related Work}
The presence of spurious correlations in the training data is a well-studied problem in machine learning. These correlations often provide convenient shortcuts that a model can exploit to make predictions \citep{beery2018recognition}. However, a model relying on these shortcuts may experience significant performance degradation when faced with different data distributions \citep{quionero2009dataset}. \citet{di2022goal} show that RL agents may use certain environment features as proxies for choosing their actions. These features are intentionally introduced in the training environments to artificially correlate with the agent's objectives. In contrast, our work demonstrates that agents, due to policy confounding, may actively contribute to the formation of spurious correlations. 

Previous studies have reported empirical evidence of specific forms of policy confounding, revealing that in deterministic environments, agents can utilize information that correlates with their progress in an episode to determine optimal actions. This strategy is effective because, under fixed policies, features like timers \citep{Song2020Observational}, agent postures \citep{lan2023can}, or previous action sequences \citep{machado2018revisiting} can be directly mapped to the agent's state. While these studies offer various hypotheses to explain their experimental observations, we contribute an overarching theory that explains the underlying causes and mechanisms behind these results, along with a series of examples illustrating other types of policy confounding. 

Out-of-trajectory (OOT) generalization is a particular instance of the more general problem of out-of-distribution (OOD) generalization in RL \citep{kirk2023survey}. The objective of OOT generalization is not to generalize to environments with different rewards \citep{Taylor09ICML}, observations \citep{mandlekar2017adversarially,zhang2020invariant}, and transitions \citep{higgins2017darla} but simply to alternative trajectories within the same environment. In our experiments, agents are evaluated in altered environments with different dynamics. These alterations are only intended to force the agent to take alternative trajectories within the same environment. Importantly, these alternative trajectories are both possible and probable in the original environment. Example \ref{ex:OODvsOOT} illustrates the distinction between OOT and OOD.  Please refer to Appendix \ref{ap:related_work} for more details on related work.

% ROBUSTNESS TO PERTURBATIONS.

\section{Preliminaries}\label{sec:preliminaries}

\subsection{Notation}
We denote random variables with capital letters (e.g., $S$), their corresponding values with lowercase letters (e.g., $s$), and their domains with calligraphic letters (e.g., $\mathcal{S}$). To denote the domain of a set of random variables $F =\{F^1, \ldots, F^{|F|} \}$, we use $\times \mathcal{F}$ as a shorthand for the Cartesian product $\mathcal{F}^1 \times \dots \times \mathcal{F}^{|F|}$. This notation represents all possible combinations of values for the variables in $F$.
\subsection{Problem formulation}
% Although, as we shall see in the experiments, policy confounding can occur even when states are fully observable, in order to understand the idea, it is useful to formulate the setting as partially observable \citep{Kaelbling96JAIR}. Moreover, since we model values and policies using (parametric) functions rather than tables, we use state variables or state factors to represent the different states of the environment \citep{boutilier1999decision}. 
% FACTORED? EXPLAIN. MOST OF THE THEORETICAL WORK TREATS STATES AS INDEPENDENT ENTITIES AND IGNORES THAT IN DEEP RL STATES ARE REPRESENTED BY A SET OF VARIABLES.

\begin{definition}[MDP]
 A Markov decision process (MDP) is a tuple $\langle \mathcal{S}, \mathcal{A}, T, R \rangle$, where $\mathcal{S}$ represents the set of states, $\mathcal{A}$ denotes the set of actions available to the agent,  $T : \mathcal{S} \times \mathcal{A} \to \Delta (\mathcal{S})$  is the transitions function, and $R : \mathcal{S} \times \mathcal{A} \to \mathbb{R}$ is the reward function.
\end{definition}

In particular, we focus on problems where states are represented by a set of observation variables, or factors \citep{boutilier1999decision}. This representation is common when modeling policies and value functions using function approximators \citep{sutton2018reinforcement}. These observation variables typically describe features of the agent's state in the environment.

\begin{definition}[FMDP]
A Factored Markov decision process (FMDP) is an MDP where
the set of states is defined by a set of observation variables, or factors, $F = \{F^1, \dots, F^{|F|}\}$. Each variable $F^i$ can take any of the values in its domain $\mathcal{F}^i$. Consequently, each state $s$ corresponds to a unique combination of values for the variables in $F$,  $s = \langle f^1, \dots, f^{|F|} \rangle \in \times \mathcal{F} = \mathcal{S}$.
\end{definition}

While, for simplicity, we employ the MDP formulation, the insights presented here are not exclusive to fully observable environments. In cases where the current observation variables $F$ do not satisfy the Markov property, $F$ is considered to be the history of action and observation variables, which is guaranteed to satisfy the Markov property \citep{Kaelbling98AI}.

% SHOULD WE USE THE TERM TRAJECTORY, state, OR BOTH? USING BOTH MIGHT BY CONFUSING!!! OUT-OF-state GENERALIZATION SOUNDS WEIRD! TRAJECTORY REPRESENTATION ALSO SOUNDS WEIRD. 

\section{State representations}

The agent's objective is to learn a policy $\pi: \mathcal{S} \to \Delta (\mathcal{A})$ that maximizes the expected discounted sum of rewards \citep{sutton2018reinforcement}. However, learning a policy that conditions on every observation variable might be impractical, particularly in scenarios with a large number of variables. Fortunately, in many problems, not all variables are strictly essential, and compact state representations can be found that are sufficient for solving the task at hand \citep{McCallum95PhD}. This is where function approximators, such as neural networks, come into the picture \citep{franccois2018introduction, ni2024transformers}. If we use them to model policies and value functions, they will learn to ignore certain observation variables in $F$ if they are deemed unnecessary for estimating values and optimal actions.

As we shall see, the phenomenon of policy confounding plays a fundamental role in this quest for simpler state representations, tricking the function approximator into forming state representations that are based on mere spurious correlations. Before delving into these intricacies, let us establish some key definitions regarding state representations. To enhance clarity, we will use the environment introduced in Section \ref{sec:example} as a running example throughout this and the next section. 

\begin{example} \label{ex:frozen}
Refer to the first paragraph of Section \ref{sec:example} for a description of the environment. We denote the agent's location by $L$. The variable $G$ indicates whether the goal is to reach the green or purple cell. $G$ is sampled at the beginning and its value remains constant throughout the episode. The value of $G$ is hidden and only passed to the agent at the starting location through the variable $X$, representing the signal. At any other location, $X$ takes a dummy value, rendering the environment partially observable. Consequently, the set $F_t$ is the history of actions, locations, and signal variables, $F_t = \{ L_0, X_0, A_0, \dots, A_{t-1}, L_t, X_t \}$. Each unique combination of values defines a state $s_t$. Here, the subscript $t$ is used to denote that the number of variables in $F$ depends on $t$. 
\end{example}

\begin{definition}[State representation] 
A state representation is a function $\Phi: \mathcal{S} \to \bar{\mathcal{S}}$, where $\mathcal{S} = \times \mathcal{F}$, $\bar{\mathcal{S}} = \times \bar{\mathcal{F}}$, and $\bar{F} \subseteq F$. 
\label{def:state_representation}
\end{definition}

Intuitively, a state representation $\Phi(s_t)$ is a state-specific projection of a state $s \in \mathcal{S} = \times \mathcal{F}$ onto a lower-dimensional space $\bar{S} = \times \bar{\mathcal{F}}$ defined by a subset of its variables, $\bar{F} \subseteq F$. We use $\{s\}^\Phi =  \{s' \in \mathcal{S}: \Phi(s') = \Phi(s)\}$ to denote the equivalence class of $s$ under $\Phi$. In Example \ref{ex:frozen}, a potential state representation could be $\Phi(s_t) = \langle l_0, x_0 \rangle$ for all $s_t \in \mathcal{S}$. This representation retains only $L_0$ and $X_0$, ignoring all other variables in $F$. Hence, all states that share the same values for $L_0$ and $X_0$ belong to the same equivalence class.

\subsection{Markov state representations}\label{sec:markov_representations}
Not all state representations are sufficient to learn the optimal policy; some, like the one discussed in the above paragraph, may exclude variables that carry valuable information for the task at hand.

\begin{definition}[Markov state representation] A state representation $\Phi(s_t)$ is said to be Markov if, for all $s_t, s_{t+1} \in \mathcal{S}$, $a_t \in \mathcal{A}$, 
\begin{equation*}
 R(s_t , a_t) = R(\Phi(s_t), a_t) \quad
\text{and} \quad \sum_{s'_{t+1} \in \{s_{t+1}\}^\Phi} T(s'_{t+1} \mid s_t, a_t) = \Pr(\Phi(s_{t+1}) \mid \Phi(s_t), a_t),
\end{equation*}
where $R(\Phi(s_t), a_t)$ denotes the reward $R(s'_t, a_t)$ at any $s'_t \in \{s_t\}^\Phi$.
% where $H'_{t+1} = \{h'_{t+1} \in H_{t+1} : \Phi(h'_{t+1}) = \Phi(h_{t+1})\}$.
\label{def:markov_state}
\end{definition}
The above definition is analogous to the notion of bisimulation \citep{Dean97AAAI, Givan03AIJ} or model-irrelevance state abstraction \citep{Li06ISAIM}. Representations satisfying these conditions are guaranteed to be behaviorally equivalent to the original representation. That is, for any given policy and initial state, the expected return (i.e., cumulative reward; \citealp{sutton2018reinforcement}) is the same when conditioning on the full set of observation variables $F$ or on the Markov state representation $\Phi$.

\begin{definition}[Minimal state representation] A state representation $\Phi^*:  \mathcal{S} \to \bar{\mathcal{S}}^*$ with $ \bar{\mathcal{S}}^* = \times \bar{\mathcal{F}}^*$ is said to be \emph{minimal}, if all other state representations $\Phi:  \mathcal{S}  \to \bar{\mathcal{S}}$ with $\bar{\mathcal{S}}  = \times \bar{\mathcal{F}}$ and $\bar{F} \subset \bar{F}^*$, for some $s \in \mathcal{S}$, are not Markov.
\label{def:minimal_state}
\end{definition}

% OR

% \begin{definition}[Minimal state representation] A state representation $\Phi^*:  H_t \to \bar{H}_t^*$ with $ \bar{H}_t^* = \times_i \domain(\bar{F}_t^i)^*$ is said to be \emph{minimal}, if there is no proper subset $\bar{F}'_t \subset \bar{F}^*_t$ such that the representation remains Markov.
% \end{definition}

In simpler terms, $\Phi^*$ is \emph{minimal} when none of the remaining variables can be removed while the representation remains Markov. Hence, we say that a minimal state representation $\Phi^*$ is a sufficient statistic of the full set of observation variables $F$. In Example \ref{ex:frozen}, representations like $\Phi(s_t) = \langle x_0, a_t, l_t \rangle$ or $\Phi(s_t) = \langle x_0, x_t, l_t \rangle$ are valid Markov state representations. However, only representations that exclusively retain the initial signal $X_0$ and the agent's current location $L_t$ (i.e., $\Phi^*(s_t) = \langle x_0, l_t \rangle$) are considered minimal, as only these two variables are necessary to capture rewards and transitions.

\begin{definition}[Superfluous variable] Let $\{\bar{F}^*\}_{\cup \Phi^*}$ be the union of variables in all possible minimal state representations. 
A variable $F^i \in F$ is said to be superfluous, if $F^i \notin \{\bar{F}^*\}_{\cup \Phi^*}$.
\label{def:superfluous}

In Example \ref{ex:frozen}, any variable other than the signal and the current location, $F \setminus \{X_0, L_t\}$, is superfluous.

\end{definition}

\subsection{$\pi$-Markov state representations}
Considering that the agent's policy will rarely visit all states, the notion of Markov state representation might be overly strict. We now introduce a relaxed version that guarantees the representation is Markov when following specific policy $\pi$.

\begin{definition}[$\pi$-Markov state representation] A state representation $\Phi^\pi(h_t)$ is said to be $\pi$-Markov if, for all $s_t, s_{t+1} \in \mathcal{S}^\pi$, $a_t \in \supp(\pi( \cdot \mid s_t))$,  
\begin{equation*}
    R(s_t , a_t) = R^\pi(\Phi^\pi(s_t), a_t)
    \quad \text{and} \quad \sum_{s'_{t+1} \in \{s_{t+1}\}^\Phi_\pi} T(s'_{t+1} \mid s_t, a_t) = \Pr^\pi(\Phi^\pi(s_{t+1}) \mid \Phi^\pi(s_t), a_t),
    % R_h(h_t, a_t) = \EX_\pi[r_t \mid \Phi^\pi(h_t), a_t]
\end{equation*}
where $S^\pi \subseteq S$ denotes the set of states visited under $\pi$, $R^\pi(\Phi^\pi(s_t), a_t)$ is the reward $R(s'_t, a_t)$ at any $s'_t \in \{s_t\}^\Phi_\pi$, with $\{s\}^\Phi_\pi =\{s' \in S^\pi: \Phi^\pi(s') = \Phi^\pi(s)\} $, and $\Pr^\pi$ is probability under $\pi$.
\label{def:pi_markov_state}
\end{definition}

\begin{definition}[$\pi$-minimal state representation] \label{def:pi_minimal_state}
A state representation $\Phi^{\pi*}:  \mathcal{S}^\pi \to \bar{\mathcal{S}}^{\pi*}$ with $\bar{\mathcal{S}}^{\pi*} = \times \bar{\mathcal{F}}^{\pi*}$ is said to be \emph{$\pi$-minimal}, if all other state representations $\Phi:  \mathcal{S}^\pi \to \bar{\mathcal{S}}^\pi$ with $\bar{\mathcal{S}}^\pi = \times \bar{\mathcal{F}}$ and $\bar{F} \subset \bar{F}^{\pi*}$, for some $s \in \mathcal{S}^\pi$, are not $\pi$-Markov.
\end{definition}

The next result demonstrates that a $\pi$-Markov state representation $\Phi^\pi$ requires at most the same variables, and in some cases fewer, than a minimal state representation $\Phi^*$, while still satisfying the Markov conditions for those states visited under $\pi$, $s \in S^\pi$. 

\begin{restatable}{proposition}{propositionone}
Let $\mathbf{\Phi^*}$ be the set of all possible minimal state representations, where every $\Phi^* \in \mathbf{\Phi^*}$ is defined as $\Phi^*: \mathcal{S} \to \bar{\mathcal{S}}^*$ with $\bar{\mathcal{S}}^* = \times \bar{\mathcal{F}}^*$. For all $\pi$ and all $\Phi^* \in \mathbf{\Phi^*}$, there exists a $\pi$-Markov state representation $\Phi^{\pi}: \mathcal{S}^{\pi} \to \bar{\mathcal{S}}^{\pi}$ with $\bar{\mathcal{S}}^{\pi} = \times \bar{\mathcal{F}}^{\pi}$ such that for all $s \in \mathcal{S}^\pi$, $\bar{F}^\pi \subseteq \bar{F}^*$. Moreover, there exist cases where $\bar{F}^\pi$ is a proper subset, $\bar{F}^\pi \subset \bar{F}^*$.
\label{prop:subset}
\end{restatable}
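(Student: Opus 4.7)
The plan is to handle the two assertions separately. For the containment claim, I would take $\Phi^\pi := \Phi^*$ with its domain restricted to $H^\pi_t$; then $\bar{\Theta}^\pi_t = \bar{\Theta}^*_t$ and containment is automatic. The real work is verifying the two $\pi$-Markov conditions of Definition~\ref{def:pi_markov_history}. The reward equality is immediate because the Markov condition on $\Phi^*$ is strictly stronger: it holds for every $h_t \in H$ and every $a_t \in A$, so in particular for $h_t \in H^\pi \subseteq H$ and $a_t \in \supp(\pi(\cdot \mid h_t)) \subseteq A$.

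The transition condition needs a short argument because the $\pi$-Markov version sums over the (a priori smaller) equivalence class $\{h_{t+1}\}^{\Phi^*}_\pi \subseteq \{h_{t+1}\}^{\Phi^*}$. I would show that dropping the non-$\pi$-reachable members does not change the sum: by the form of $T_h$, any $h'_{t+1}$ with $T_h(h'_{t+1} \mid h_t, a_t) > 0$ is a one-step extension of $h_t$ by $a_t$, and any such extension of a $\pi$-reachable $h_t$ via a $\pi$-supported $a_t$ is itself in $H^\pi$. The excluded histories therefore contribute zero. By the Markov property of $\Phi^*$ the surviving sum equals $\Pr(\Phi^*(h_{t+1}) \mid \Phi^*(h_t), a_t)$, which in turn coincides with $\Pr^\pi(\Phi^*(h_{t+1}) \mid \Phi^*(h_t), a_t)$: once $a_t$ is conditioned upon, the one-step history transition is a property of the environment alone and does not depend on $\pi$.

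For the proper-subset claim, I would exhibit the Frozen T-Maze of Section~\ref{sec:example}. Any minimal Markov $\Phi^*$ there must keep the initial signal variable $x_1^{\mathrm{sig}}$ in addition to the current location $x_t^{\mathrm{loc}}$, because from certain intermediate cells both signal colours are consistent with the agent's location, so without $x_1^{\mathrm{sig}}$ the reward at the goal cannot be predicted. Let $\pi$ be the deterministic optimal policy that follows the green path on a green signal and the purple path on a purple signal. Because the two paths are disjoint, on $H^\pi$ the current location uniquely determines which signal was observed. Taking $\Phi^\pi$ to project onto $\bar{\Theta}^*_t \setminus \{x_1^{\mathrm{sig}}\}$ (with domain $H^\pi_t$) then satisfies both $\pi$-Markov conditions---the reward can be recovered from location and time, and the dynamics along each path are deterministic---yielding $\bar{\Theta}^\pi_t \subsetneq \bar{\Theta}^*_t$.

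The hard part will be the bookkeeping in the second paragraph: cleanly separating zero from nonzero contributions to the transition sum, and confirming that $\Pr$ and $\Pr^\pi$ agree once $a_t$ is fixed. After that, the rest reduces to unfolding the definitions.
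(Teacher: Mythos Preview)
Your proposal is correct and follows essentially the same route as the paper: show that restricting a minimal Markov representation $\Phi^*$ to $H^\pi$ already yields a $\pi$-Markov representation (giving $\bar{\Theta}^\pi_t = \bar{\Theta}^*_t$), and then invoke the Frozen T-Maze example for the strict-inclusion case. The paper packages the first step through the more general Lemma~\ref{lemma:one} (comparing two policies with nested action supports) and applies it with $\pi_1$ the fully-supported policy, but the substantive argument is identical; your direct verification of the two $\pi$-Markov conditions is in fact more careful than what appears in the paper, particularly the observation that non-$\pi$-reachable members of $\{h_{t+1}\}^{\Phi^*}$ contribute zero to the transition sum.
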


% The proofs for this and all other theoretical results are deferred to Appendix \ref{ap:proofs}.

It is clear that $\Phi^\pi$ will never require more variables than the corresponding minimal state representation $\Phi^*$ because, as per Definition \ref{def:markov_state}, $\Phi^*$ captures all the essential information. The situation where $\bar{F}^\pi \subset \bar{F}^*$ arises with particular policies that exclusively visit a subset of states. In such cases, the agent may require fewer variables within that subset to accurately capture rewards and transitions. Take, for instance, a policy that makes the agent stay put. The $\pi$-minimal representation under such a policy is the empty set, $\Phi(s_t) = \emptyset$ for all $s_t \in \mathcal{S}^\pi$, as the agent consistently receives the same reward and does not move from the initial state. 

\section{Policy Confounding}

Judging from the previous example, it might be tempting to assume that having $\bar{F}^\pi \subset \bar{F}^*$ is merely an incidental outcome of following a policy $\pi$ that visits a subset of states, where some variables coincidentally happen to be unnecessary. Moreover, considering that $\bar{F}^*$ is constructed to capture the essential variables necessary for the task, one may further conclude that a policy $\pi$ inducing representations such that $\bar{F}^\pi \subset \bar{F}^*$ can never be optimal. However, as demonstrated by the following example, it turns out that the states visited by a particular policy, especially if it is the optimal policy, tend to contain a lot of redundant information. This is particularly true in environments where future states are heavily influenced by past actions.

\begin{figure}
\vspace{-12pt}
\centering
\includegraphics[width=\textwidth]{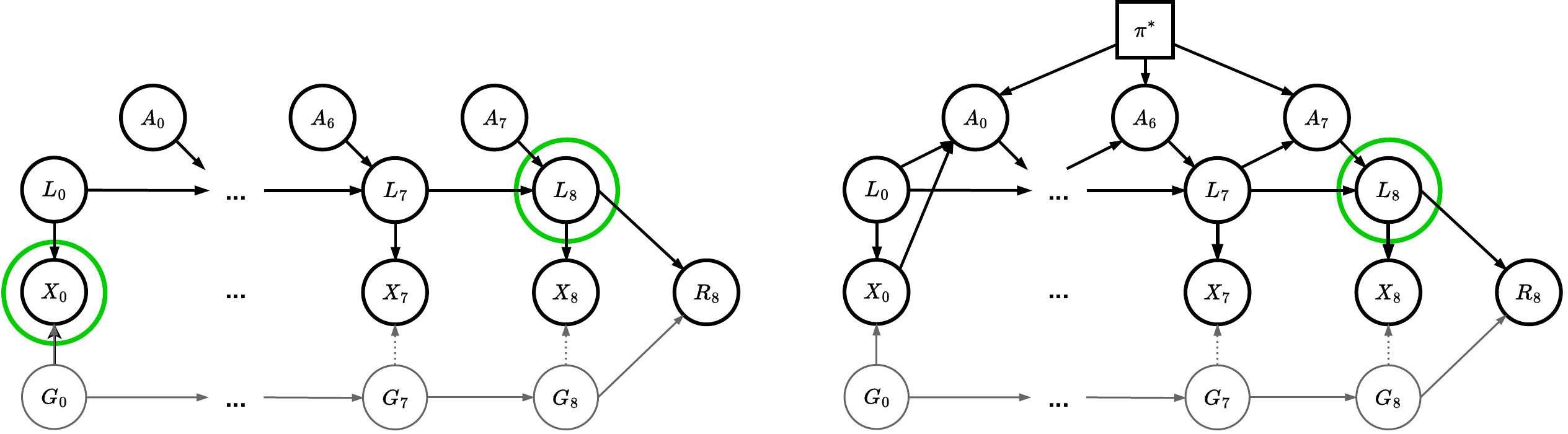}
\caption{Two DBNs representing the dynamics of the Frozen T-Maze environment, when actions are sampled at random (left), and when they are determined by the optimal policy (right). The green circles highlight the $\pi$-mininal state representation in each of the two cases.} 
\label{fig:tmaze_dbn}
\vspace{-10pt}
\end{figure}

Let us revisit Example \ref{ex:frozen}. Figure \ref{fig:tmaze_dbn} shows two dynamic Bayesian networks (DBNs) describing the environment's dynamics: one with random action sampling (left) and the other with actions determined by the optimal policy (right). Both networks are unrolled from $t=0$ to $t=8$, with intermediate nodes omitted for simplicity. Suppose we aim to predict the reward $R_8$ given $s_8 = \langle l_0, x_0, a_0, \dots, a_7, l_8, x_8 \rangle$. In the case of random action sampling (left DBN), to predict $R_8$, one needs $X_0$ and $L_8$. This is because $\langle A_0, \dots, A_7 \rangle$ appear as exogenous and can take any possible value. Hence, the reward could be either $-0.1$ (per timestep penalty), $-1$ (wrong goal), or $+1$ (correct goal) depending on the actual values of $X_0$ and $L_8$. As established in Section \ref{sec:markov_representations}, we know that $\Phi^*(s_t) = \langle x_0, l_t \rangle$ is a minimal state representation.

Conversely, when actions are determined by the optimal policy $\pi^*$ (right DBN), knowing $L_8$ alone suffices to determine $R_8$. The reason is that, under $\pi^*$, the agent always takes the action 'move up' at the starting location when receiving the green signal or 'move down' when receiving the purple signal and then follows the shortest path toward each of the goals. As shown by the diagram, this makes the action $A_0$, and thus all future agent locations, dependent on the initial signal $X_0$. Hence, the agent’s location $L_t$ becomes a proxy for $X_0$, allowing the agent to ignore $X_0$ and still predict transitions and rewards. Consequently, from $t=1$ onward, $\Phi^{\pi^*}(s_t) = l_t$ is a $\pi$-minimal state representation (Definition \ref{def:pi_minimal_state}) as it constitutes a sufficient statistic of the state $s_t$ under $\pi^*$. For the same reason, from $t=1$, actions may also condition only on $L_t$.

\begin{wrapfigure}{r}{0.36\textwidth}
\vspace{-20pt}
\centering
\includegraphics[width=0.33\textwidth]{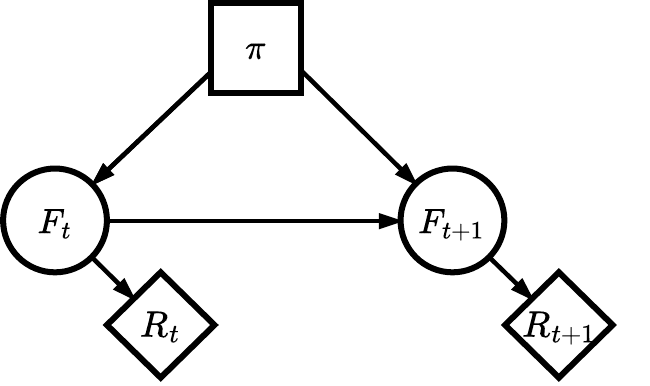}
\caption{A DBN illustrating the phenomenon of policy confounding. The policy opens a backdoor path that can affect conditional relations between the variables in $F_t$ and $
F_{t+1}$.} 
\label{fig:policy_confounding}
\vspace{-15pt}
\end{wrapfigure}
% The phenomenon highlighted by the previous example is the result of the spurious correlation, induced by the optimal policy, between the agent's location $x^2$, and the future reward $r_8$. Generally speaking, policies, acting as confounders, open backdoor paths \citep{pearl2016causal} between future transitions and rewards, and the variables in the current state $h_t$. This is illustrated by the DBN in Figure \ref{fig:dbn_key2door}.

The phenomenon highlighted by the previous example results from a spurious correlation induced by the optimal policy between the initial signal $X_0$ and the agent's future locations $\langle L_1, \dots, L_8\rangle$. Generally speaking, this occurs because policies act as confounders, opening backdoor paths between future observation variables $F_{t+1}$ and the variables in the current state $F_t$ \citep{pearl2016causal}. This is illustrated by the DBN depicted in Figure \ref{fig:policy_confounding}, where the policy influences both the variables in $F_t$ and the variables in $F_{t+1}$, potentially altering their correlations.

\begin{definition}[Policy Confounding] A state representation $\Phi : \mathcal{S} \to \bar{\mathcal{S}}$ is said to be confounded by a policy $\pi$ if, for some $s_t, s_{t+1}  \in \mathcal{S}$, $a_t \in \mathcal{A}$,
    \begin{equation*}
    R^\pi(\Phi(s_t), a_t) \neq R^\pi(\dooperator(\Phi(s_t)), a_t)
    \quad \text{or} \quad
        \Pr^\pi(\Phi(s_{t+1}) \mid \Phi(s_t), a_t) \neq \Pr^\pi(\Phi(s_{t+1}) \mid \dooperator(\Phi(s_t)), a_t).
        % $}
   \end{equation*}
\normalsize
\end{definition}

The operator $\dooperator(\cdot)$ is known as the do-operator, and it is used to represent physical interventions in a system \citep{pearl2016causal}. These interventions are meant to distinguish cause-effect relations from mere statistical associations. In our case, $\dooperator(\Phi(s_t))$ means setting the variables forming the state representation $\Phi(s_t)$ to a particular value and considering all possible states in the equivalence class, $s'_t \in \{s_t\}^\Phi$, (i.e., all states that share the same value for the observation variables that are `selected' by $\Phi$; independently of whether these are visited by the policy being followed). For instance, in the above example, $R^{\pi^*}(L_8 = \text{`green goal'}) = +1$ when following $\pi^*$ since we know that $X_0 = \text{`green'}$, while $R^{\pi^*}(\dooperator(L_8 = \text{`green goal'})) = \pm1$ since $X_0$ can be either `green' or `purple'.

It is easy to show that the underlying reason why a $\pi$-Markov state representation may require fewer variables than the minimal state representation is indeed policy confounding.

\begin{restatable}{theorem}{policyconfounding}
Let $\Phi^*:  \mathcal{S} \to \bar{\mathcal{S}}^*$  with $\bar{\mathcal{S}}^* = \times \bar{\mathcal{F}}^*$ be a minimal state representation. If, for some $\pi$, there is a $\pi$-Markov state representation $\Phi^{\pi}:  \mathcal{S}^{\pi} \to \bar{\mathcal{S}}^{\pi}$ with $\bar{\mathcal{S}}^{\pi} = \times \bar{\mathcal{F}}^\pi$, such that $\bar{F}^\pi \subset \bar{F}^*$ for some $s \in \mathcal{S}$, then $\Phi^\pi$ is confounded by policy $\pi$.
\end{restatable}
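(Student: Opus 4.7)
The approach is proof by contradiction. Suppose, toward a contradiction, that $\Phi^\pi$ is \emph{not} confounded by $\pi$. I aim to show this forces the canonical extension of $\Phi^\pi$ to the whole history space $H_t$ to be a Markov history representation in the sense of Definition~\ref{def:markov_history}, and then derive a contradiction with the minimality of $\Phi^*$.

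To set this up, I would first observe that $\Phi^\pi$ is by construction a projection onto a subset of variables $\bar{\Theta}^\pi_t \subseteq \Theta_t$, so it extends canonically to a map $\tilde{\Phi}: H_t \to \bar{H}^\pi_t$ by the same variable projection on every history, not only those in $H^\pi_t$. The corresponding equivalence class $\{h_t\}^{\tilde\Phi} = \{h'_t \in H_t : \tilde\Phi(h'_t) = \tilde\Phi(h_t)\}$ contains the $\pi$-restricted class $\{h_t\}^\Phi_\pi$.

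The heart of the argument is to unpack non-confoundedness using the do-operator. By construction $\dooperator(\Phi^\pi(h_t))$ sets the representation by intervention, severing any policy-induced edges and ranging over the full equivalence class $\{h_t\}^{\tilde\Phi}$ independently of $\pi$. Hence $R^\pi(\dooperator(\Phi^\pi(h_t)), a_t)$ and $\Pr^\pi(\Phi^\pi(h_{t+1}) \mid \dooperator(\Phi^\pi(h_t)), a_t)$ coincide with the policy-free $R_h$ and the summed $T_h$ on that class whenever the latter are well-defined. The two non-confoundedness equalities, combined with the $\pi$-Markov property that $\Phi^\pi$ already enjoys on $H^\pi_t$, then force $R_h(h'_t, a_t)$ to be constant over the full class $\{h_t\}^{\tilde\Phi}$ and $\sum_{h'_{t+1} \in \{h_{t+1}\}^{\tilde\Phi}} T_h(h'_{t+1} \mid h'_t, a_t)$ to depend on $h'_t$ only through $\tilde\Phi(h'_t)$. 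These are exactly the two conditions of Definition~\ref{def:markov_history}, so $\tilde\Phi$ is an unconditional Markov history representation using variables $\bar{\Theta}^\pi_t$.

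Since $\bar{\Theta}^\pi_t \subset \bar{\Theta}^*_t$ is a proper subset at some $h_t$, the existence of this Markov $\tilde\Phi$ contradicts the minimality of $\Phi^*$ (Definition~\ref{def:minimal_history}), completing the proof. The main obstacle is the middle step: one has to read the do-operator carefully enough to show that non-confoundedness \emph{lifts} the $\pi$-Markov conditions from $\{h_t\}^\Phi_\pi$ to the full equivalence class $\{h_t\}^{\tilde\Phi}$, rather than merely to the portion of it that $\pi$ visits. Once this transport from $H^\pi$ to $H$ is justified, the clash with minimality of $\Phi^*$ is immediate.
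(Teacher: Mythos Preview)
Your proposal is correct and follows essentially the same contrapositive argument as the paper: assume no confounding, use the do-operator to lift the $\pi$-Markov equalities from the $\pi$-restricted equivalence classes to the full equivalence classes (so that $\Phi^\pi$, or your explicit extension $\tilde\Phi$, becomes an unconditional Markov history representation), and then contradict the minimality of $\Phi^*$. The only cosmetic difference is that you name the canonical extension $\tilde\Phi$ explicitly, whereas the paper works directly with $\Phi^\pi$ while implicitly treating it over all of $H_t$.
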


% \begin{figure}[h!]
% \centering
% \includegraphics[width=0.45\textwidth]{figures/location_signal_plot_green.pdf}
% \caption{DQN vs. PPO. Frozen T-Maze, Key2Door, Diversion results.}
% \label{fig:location_signal_plot}
% \end{figure}

% The above result implies that 

Finally, it is worth noting that even though, in Example \ref{ex:frozen}, the variables included in the $\pi$-minimal state representation are a subset of the variables in the minimal state representation, $\bar{F}^{\pi*} \subset \bar{F}^*$, this is not always the case, as $\bar{F}^{\pi*}$ may contain superfluous variables (Definition \ref{def:superfluous}). An example illustrating this situation is provided in Appendix \ref{ap:watch_example} (Example \ref{ex:watch_time}).

\begin{restatable}{proposition}{propositiontwo}
Let $\{\bar{F}^*\}_{\cup \Phi^*}$ be the union of variables in all possible minimal state representations. There exist cases where, for some $\pi$, there is a $\pi$-minimal state representation $\Phi^{\pi *}: \mathcal{S}^\pi \to \bar{\mathcal{S}}^{\pi *}$ with $\bar{\mathcal{S}}^{\pi *} = \times \bar{\mathcal{F}}^{\pi*}$ such that $\bar{F}^{\pi *} \setminus \{\bar{F}^*\}_{\cup \Phi^*} \neq \emptyset$.
% \begin{restatable}{proposition}{propositiontwo}
% Let
% $\Phi^{\pi*}:  H^\pi_t \to \bar{H}^{\pi*}_t$ with $\bar{H}^{\pi*}_t = \times \bar{F}^{\pi*}_t$ be a $\pi$-minimal state representation under policy $\pi$ and $\Phi^*:  H_t \to \bar{H}^*_t$ with $\bar{H}^*_t = \times \bar{F}^*_t$ a minimal state representation. There exist cases where
% $\bar{F}^{\pi*}_t \setminus \bar{F}^*_t \neq \emptyset$. 

% \quad \text{and} \quad \bar{F}^*_t \setminus \bar{F}^{\pi*}_t \neq \emptyset.$$
\end{restatable}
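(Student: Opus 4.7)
The claim is existential, so the plan is to exhibit one FPOMDP together with a policy $\pi$ witnessing the phenomenon, and the natural candidate is the ``watch'' construction the authors defer to Appendix \ref{ap:watch_example}. The guiding intuition is that a variable can be globally useless for predicting rewards and transitions (hence excluded from \emph{every} minimal representation, i.e.\ superfluous in the sense of Definition \ref{def:superfluous}) and yet, along the restricted trajectories induced by a fixed $\pi$, become perfectly correlated with the variables that \emph{would} otherwise be required. The $\pi$-minimal representation can then trade a ``legitimate'' variable for the superfluous one.

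Concretely, I would take an environment similar to Frozen T-Maze but augmented with an extra observation variable $c_t$ that acts as a clock or ``watch'': $c_t$ is updated by a fixed deterministic rule independent of the agent's actions, the other state variables, and the reward. By this construction, $c_t$ is conditionally independent of future rewards and transitions given the remaining variables, so it can be dropped from any Markov representation without breaking either condition of Definition \ref{def:markov_history}. Hence $c_t$ belongs to no $\Phi^{*} \in \mathbf{\Phi^{*}}$, so $c_t \notin \{\bar{\Theta}_t^*\}_{\cup \Phi^*}$. I would then pick $\pi$ to be a deterministic optimal policy whose induced trajectories are unique given the initial history, so that on $H^\pi$ the map $c_t \mapsto$ (agent location, any other ``legitimate'' variable) is well-defined and invertible at every reachable $h_t$.

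With these two ingredients, I would verify the two conditions of Definition \ref{def:pi_markov_history} for $\Phi^\pi(h_t) = \langle x^1_0, c_t\rangle$ (or $\Phi^\pi(h_t) = c_t$ alone if the signal $x^1_0$ is itself recoverable from $c_t$ on $H^\pi$): (i) on each equivalence class $\{h_t\}^\Phi_\pi$ every history corresponds to the same location, hence $R^\pi_h$ is well-defined on $\Phi^\pi(h_t)$; (ii) $c_{t+1}$ is a deterministic function of $c_t$, so the transition condition collapses to a single term. Then I would argue $\pi$-minimality by showing that deleting $c_t$ from this representation yields a strictly smaller set of variables that no longer distinguishes the two optimal trajectories (green/purple), so the $\pi$-Markov conditions fail. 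Thus $\Phi^\pi$ is $\pi$-minimal and contains $c_t$, giving $\bar{\Theta}^{\pi*}_t \setminus \{\bar{\Theta}_t^*\}_{\cup \Phi^*} \supseteq \{c_t\} \neq \emptyset$.

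The main obstacle is the ``no minimal representation contains $c_t$'' clause, because it quantifies over \emph{all} minimal representations in $\mathbf{\Phi^*}$, not just one. The cleanest way to dispatch this is a general lemma: if a variable is conditionally independent of the next history and the reward given the other variables of some Markov representation, then it can be removed while preserving the Markov conditions, so it cannot appear in any minimal representation. The clock $c_t$ satisfies this independence by construction, which settles the membership claim and completes the example.
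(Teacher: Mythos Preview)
Your proposal is correct and follows essentially the same route as the paper's proof sketch: both exhibit a deterministic environment with an extra ``clock'' variable that is irrelevant in general (hence absent from every minimal $\Phi^*$) but becomes a perfect proxy for the task-relevant state along the trajectories of a fixed deterministic policy, so that a $\pi$-minimal representation can be built on it. Your treatment is in fact more careful than the paper's sketch—you explicitly discharge the universal quantifier over $\mathbf{\Phi^*}$ via the conditional-independence lemma, whereas the paper leaves this implicit and simply points to Example~\ref{ex:watch_time}.
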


\subsection{Why should we care about policy confounding?}\label{sec:OOT}% Out-of-trajectory generalization}
% Learning such a representation is helpful because fewer resources are needed to build a policy on top of it. However, as we explain in Section \ref{sec:OOT}, this policy may break when the agent's trajectories change even only slightly.

Leveraging spurious correlations to develop simple habits can be advantageous when resources such as memory, computing power, or data are limited. Agents can exclude variables from the state representation if they are redundant under their policy. However, the challenge is that some of these variables may be crucial to ensure that the agent behaves correctly when the context changes. In the Frozen T-Maze example from Section \ref{sec:example}, we observed how the agent could no longer find the correct goal when the ice pushed it away from the optimal trajectory. This is a specific case of a well-researched issue known as out-of-distribution (OOD) generalization \citep{quionero2009dataset, arjovsky2021out}. We refer to it as \emph{out-of-trajectory} (OOT) generalization to highlight that the problem here is that the agent is unable to generalize to alternative trajectories within the same environment. This is in contrast to previous works \citep{kirk2023survey} that address generalization to environments that differ in some way from the training environment. Example \ref{ex:OODvsOOT} illustrates the distinction between OOT and OOD.

Ideally, the agent should aim to learn representations that enable it to predict future rewards and transitions even when experiencing slight variations in its trajectory. Based on Definition \ref{def:markov_state}, we know that, in general, only a Markov state representation satisfies these requirements. However, computing such representations is typically intractable \citep{Ferns2006Methods}, and thus most standard RL methods usually learn representations by maximizing an objective function that depends on the distribution of trajectories $P^b(\tau)$ visited under a behavior policy $b$ (e.g., expected return, $\EX_{\tau \sim P^b(\tau)} \left[G(\tau)\right]$; \citealp{sutton2018reinforcement}). The problem is that $b$ may favor certain trajectories over others, which may lead to the exploitation of spurious correlations in the learned representation.

\subsection{When should we worry about OOT generalization in practice?}\label{sec:when}% The Deadly Triad: Volume II}

% So far, we have simply shown that under particular policies, the state representations needed to predict transitions and rewards may not need to be strictly Markov and rather only $\pi$-Markov.

 % We now delve into the circumstances in which policy confounding is most prone to cause problems in practice.

\paragraph{Function approximation}
Function approximation has enabled traditional RL methods to scale to high-dimensional problems, where storing values in lookup tables is infeasible \citep{franccois2018introduction}. Using parametric functions (e.g., neural networks) to model policies and value functions, agents can learn abstractions by grouping together states if these yield the same transitions and rewards. As mentioned before,  abstractions occur naturally when states are represented by a set of variables since the functions simply need to ignore some of these variables.  However, this also implies that value functions and policies are exposed to spurious correlations. If a particular variable becomes irrelevant due to policy confounding, the function may learn to ignore it and remove it from its representation (Example \ref{ex:frozen}). This is in contrast to tabular representations, where, every state takes a separate entry, and even though there exist algorithms that perform state abstractions in tabular settings \citep{Andre02AAAI, Givan03AIJ}, these abstractions are normally formed offline before learning the policy, hence avoiding the risk of policy confounding.
% , which means that variables, no matter whether they are irrelevant or not, are not .
% represented as parametric models (e.g. neural networks) However, particular instantiation $h_t$ of the AOH affects the estimates of many other AOHs. Policies and value functions are therefore biased toward the data stream generated by the agent.
% which breaks the typical assumption of independent and identically distributed samples [CITE]. Even more so when these models condition on AOHs rather than single observations. 
% Hence any spurious correlation induced by the agent's policy may affect not only those AOHs that are frequently visited but potentially many others.
% 
% Hence, policies may overfit and become unable to generalize to trajectories that are not often taken by the agent

\paragraph{Narrow trajectory distributions}
In practice, agents are less prone to policy confounding when the trajectory distribution $P^b(\tau)$ is broad (i.e., when $b$ encompasses a wide set of trajectories) than when it is narrow. This is because the spurious correlations present in certain trajectories are less likely to have an effect on the learned representations. On-policy methods (e.g., SARSA, Actor-Critic; \citealp{sutton2018reinforcement}) are particularly troublesome for this reason since the same policy being updated must also be used to collect the samples.
% The standard way of training RL agents is through some form of return maximization. This is done by first learning an approximation of the value function and then selecting the actions that maximize the predicted values, or by directly optimizing the policy through the policy gradient \citep{sutton2018reinforcement}. Either way, there is always an objective function  $J(D^{\pi})$ that depends on a dataset of samples collected from the environment. This dataset belongs to a distribution of trajectories induced by the environment itself and by the policy followed when collecting those samples,
% $$D^{\pi_b} \sim P^{\pi_b}(H),$$
% $$J(\pi, D^{\pi_b}) = \frac{1}{N} \sum_{h_t \in D^{\pi_b}} G^\pi(h_t)$$
% where $\pi$ is either a single policy (i.e. on-policy methods) or a mixture policies (i.e. off-policy methods), 
% $$\pi(a_t \mid h_t) = \sum_{\pi_i \in \Pi_{\text{mix}}} \pi_i(a_t \mid h_t) P(\pi_i),$$ 
% with $\Pi_{\text{mix}}$ being the set of policies in the mixture, and $P(\pi_i)$ their likelihood. This implies that the distribution of trajectories $P^\pi(H_t)$ will be narrower or broader depending on how diverse the mixture of policies is. 
 Yet, even when the trajectory distribution is narrow, there is no reason why the agent should pick up on spurious correlations while its policy is still being updated. Only when the agent commits to a particular policy should we start worrying about policy confounding. At this point, lots of the same trajectories are being used for training, and the agent may \emph{`forget'} \citep{french1999catastrophic} that, even though certain variables may  no longer be needed to represent the states, they were important under previous policies. This generally occurs at the end of training when the agent has converged to a particular policy. However, if policy confounding occurs earlier during training, it may prevent the agent from further improving its policy  (\citealp{nikishin2022primacy}; please refer to Appendix \ref{ap:related_work} for more details).

% THIS IS NOT SO BAD BECAUSE, IN GENERAL, EVEN IF THE POLICY MIXTURE DOES NOT SUPPORT ALL ACTIONS IT MIGHT STILL  VISIT ALL STATES.

% DISCUSS THAT IN GENERAL TRAJECTORY DEVIATIONS WILL NOT BE ARBITRARILY FAR FROM THE PREFERRED TRAJECTORIES.
% IT IS ONLY A PROBLEM WHEN THE AGENT COMMITS TO A PARTICULAR (DETERMINISTIC) POLICY

% THEOREM: Combination of these three factors

% FULLY OBSERVABLE ENVIRONMENTS
\subsection{What can we do to improve OOT generalization?}\label{sec:what_do}
As mentioned in the introduction, we do not have a complete answer to the problem of policy confounding. Yet, here we offer a few off-the-shelf solutions that, while perhaps limited in scope, can help mitigate the problem in some situations. These solutions revolve around the idea of broadening the distribution of trajectories to dilute the spurious correlations introduced by certain policies.

\paragraph{Off-policy methods}
We already explained in Section \ref{sec:when} that on-policy methods are particularly prone to policy confounding since they are restricted to using samples coming from the same policy. A rather obvious solution is to instead use off-policy methods, which allow using data generated from previous policies. Because the samples belong to a mixture of policies it is less likely that the model will pick up the spurious correlations present on specific trajectories. However, as we shall see in the experiments, this alternative works only when replay buffers are large enough.
% SAMPLING FROM A SINGLE POLICY VS SAMPLING FROM A MIXTURE OF POLICIES.
This is because standard replay buffers are implemented as queues, and hence the first experiences coming in are the first being removed. This implies that a replay buffer that is too small will contain samples coming from few and very similar policies. Since there is a limit on how large replay buffers are allowed to be, future research could explore other, more sophisticated, ways of deciding what samples to store and which ones to remove \citep{shcaul2016prioritized}.
% \paragraph{Diverse experiences and/or large replay buffers}
% To prevent th it is important  that the experiences stored in the replay buffer are diverse enough such that when .   
% \paragraph{Early stopping}
% We know that as long as there is enough variety in the samples we use to train the policy, there is no risk of policy confounding. Therefore, a potential solution that may work even for on-policy methods, is to stop training the agent when we see that the performance does not improve and the policy stabilizes. However, as noted before, policy confounding may also occur at an early stage at which the policy still performs badly. Moreover, early stopping conflicts with the idea of lifelong learning, which envisions agents that learn from a continuous stream of data \citep{khetarpal2022towards}.
% EXPLAIN HOW THE AGENT CAN GET STUCK IN LOCAL MINIMA DUE TO POLICY CONFOUNDING. PRIMACY BIAS.
\paragraph{Exploration and domain randomization}
When allowed, exploration may mitigate the effects of policy confounding and prevent agents from overfitting their preferred trajectories. Exploration strategies have already been used for the purpose of generalization; to guarantee robustness to perturbations in the environment dynamics \citep{eysenbach2022maximum}, or to boost generalization to unseen environments \citep{jiang2022uncertaintydriven}.  The goal for us is to remove, to the extent possible, the spurious correlations introduced by the current policy. Unfortunately, though, exploration is not always without cost. Safety-critical applications require the agent to stay within certain boundaries \citep{altman1999constrained,garcia2015comprehensive}.
When training on a simulator, an alternative to exploration is domain randomization \citep{tobin2017domain, peng2018sim, machado2018revisiting}. 
% Instead, our use case is similar to that of randomized controlled trials in medical experiments \citep{hariton2018randomised}, which intend to isolate the effect of a treatment or intervention from potential confounders.
% ADD NOISE TO TRANSITIONS SUCH THAT ALL STATES ARE VISITED. REMOVE THAT NOISE WHEN UPDATING POLICY AND VALUE FUNCTIONS TO AVOID LEARNING A POLICY THAT COMPENSATES FOR THAT ADDED NOISE (CLIFF-WALK EXAMPLE).
The empirical results reported in the next section suggest that agents become less susceptible to policy confounding when adding enough stochasticity. Yet, there is a limit on how much noise can be added to the environment or the policy without altering the optimal policy (\citealp[Example 6.6: Cliff Walking]{sutton2018reinforcement}).
% if sufficient exploration is allowed and when

\section{Experiments}\label{sec:experiments}

The experiments aim to (1) validate the occurrence of policy confounding as described by the theory, (2) identify the conditions under which agents are most susceptible to the effects of policy confounding and fail to generalize, and (3) assess the effectiveness of strategies proposed in the previous section in mitigating these effects. To achieve this, a series of simple grid-world environments has been devised as pedagogical examples to highlight the issue and clarify the theory. We would like to emphasize that our primary contribution lies in characterizing the phenomenon of policy confounding. The extent to which this phenomenon manifests in more realistic settings is beyond the scope of this paper. However, we believe that the failure of standard RL methods in these simplistic environments raises important concerns. Moreover, we refer the reader to Appendix \ref{ap:related_work} for a review of prior works reporting evidence of particular forms of policy confounding in high-dimensional environments.

% PEDAGOGICAL EXAMPLES. AND REFER TO WORKS WHERE PARTICULAR INSTANCES OF POLICY CONFOUNDING HAVE BEEN EXPERIENCED IN HIGHER-DIMENSIONAL ENVIRONMENTS.
\subsection{Experimental setup}
Agents are trained with an off-policy method, DQN \citep{Mnih15Nature} and an on-policy method, PPO \citep{schulman2017proximal}. We represent policies and value functions as feedforward neural networks and use a stack of past observations as input in the environments that require memory. We report the mean return as a function of the number of training steps. Training is interleaved with periodic evaluations on the original environments and variants thereof used for validation. The results are averaged over 10 random seeds. Refer to Appendix \ref{ap:experimental_details} for more details about the setup.
\subsection{Environments}
We run our experiments on three grid-world environments: the \textbf{Frozen T-Maze} from Section \ref{sec:example}, and the below described \textbf{Key2Door}, and \textbf{Diversion} environments.
\begin{example}\textbf{Key2Door.}
%  \begin{figure}[h]
% \centering
% \includegraphics[width=0.4\textwidth]{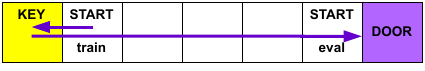}
% \caption{Key2Door.} 
% \label{fig:key2door}
% \end{figure}
Here, the agent needs to collect a key placed at the beginning of the corridor in Figure \ref{fig:key2door_offtrack} (left) and then open the door at the end. The current observation variables do not show whether the key has already been collected. The states are thus given by the history of past locations $s_t = \langle l_0, \dots, l_t \rangle$. This is because to solve the task in the minimum number of steps, the agent must remember that it already got the key when going to the door. Yet, since during training, the agent always starts the episode at the first cell from the left, when moving toward the door, the agent can forget about the key (i.e., ignore past locations) once it has reached the third cell. As in the Frozen T-Maze example, the agent can build the habit of using its own location to tell whether it has or has not got the key yet. This, can only occur when the agent consistently follows the optimal policy, depicted by the purple arrow. Otherwise, if the agent moves randomly through the corridor, it is impossible to tell whether the key has or has not been collected. In contrast, in the evaluation environment, the agent always starts at the second to last cell, this confuses the agent, which is used to already having the key by the time it reaches said cell. A DBN is provided in Appendix \ref{ap:dbns}.
\end{example}
%  \begin{figure}[h!]
% \centering
% \includegraphics[width=0.4\textwidth]{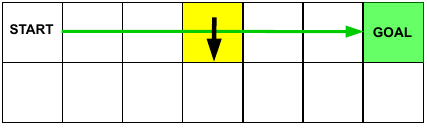}
% \caption{Off-Track.} 
% \label{fig:off-track}
% \end{figure}
\begin{figure}


% \vspace{pt}
     \centering
     \begin{subfigure}[b]{0.4\textwidth}
         \centering    \includegraphics[width=\textwidth]{figures/key2door.pdf}
         \vspace{0.8pt}
     \end{subfigure}
     \hspace{40pt}
     \begin{subfigure}[b]{0.4\textwidth}
         \centering
\includegraphics[width=0.95\textwidth]{figures/off-track.pdf}  
     \end{subfigure}
      \caption{Illustrations of the Key2Door (left) and Diversion (right) environments.}
      \label{fig:key2door_offtrack}
      \vspace{-10pt}
\end{figure}
\begin{example}
\textbf{Diversion.} Here, the agent must move from the start state to the goal state in Figure~\ref{fig:key2door_offtrack} (right). The observations are length-$8$ binary vectors. The first $7$ elements indicate the column where the agent is located. The last element indicates the row. This environment aims to show that policy confounding can occur not only when the environment is partially observable, as was the case in the previous examples, but also in fully observable scenarios. After the agent learns the optimal trajectory depicted by the green arrow, it can disregard the last element in the observation vector. This is because, if the agent does not deviate, the bottom row is never visited. Rather than forgetting past information, the agent ignores the last element in the current observation vector for being irrelevant when following the optimal trajectory.
% The discount factor $\gamma$ is set to $0.99$ to encourage the agent to take the shortest path toward the goal.
We train the agent in the original environment and evaluate it in a version with a yellow diversion sign in the middle of the maze that forces the agent to move to the bottom row. 
A DBN is provided in Appendix \ref{ap:dbns}.
\end{example}

% Note that, as we did in the Frozen T-Maze environment, in order to assess whether the learned state representations generalize beyond the agent's usual trajectories we must modify the environment dynamics so as to force the agents to take alternative trajectories. Importantly, these alternative trajectories are both possible and probable in the original environments, and thus one would expect well-trained agents to perform well on them.
\subsection{Results}
% \begin{figure*}[h!]
% \centering
% \includegraphics[width=0.7\textwidth]{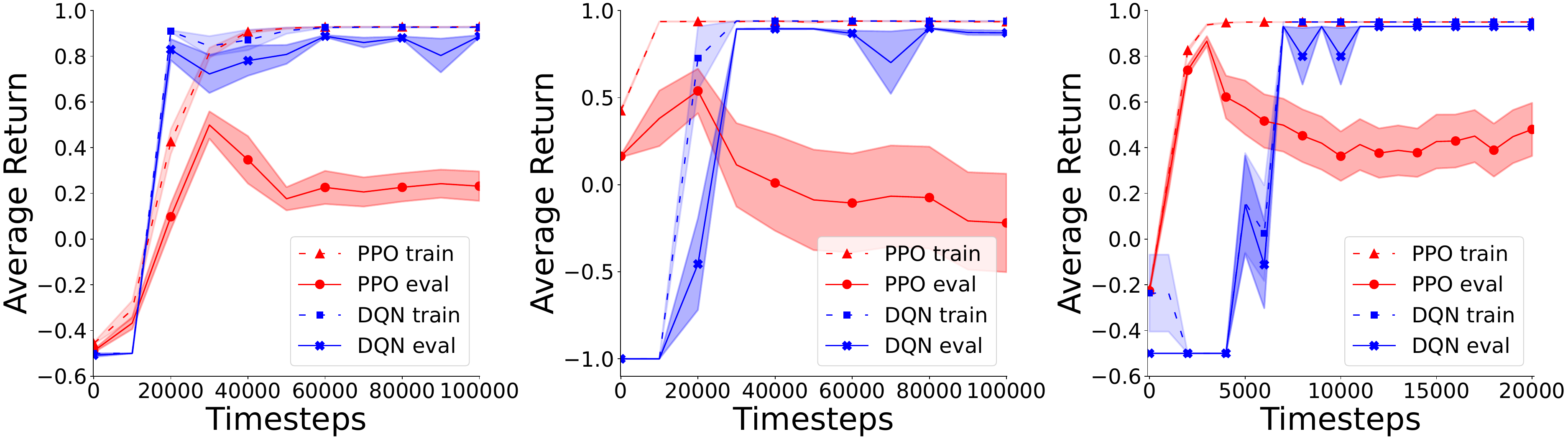}
% \caption{Performance comparison of DQN and PPO in the training and evaluation variants of the Frozen T-Maze (left), Key2Door (middle), and Diversion environments (right). } 
% \label{fig:DQNvsPPO}
% \end{figure*}
\begin{figure}
\vspace{-15pt}
% \begin{minipage}{.56\linewidth}
\centering
  \includegraphics[width=0.8\linewidth]{}
  \captionof{figure}{DQN vs. PPO in the  train and evaluation variants of Frozen T-Maze (left), Key2Door (middle), and Diversion (right).}
  \label{fig:DQNvsPPO}
% \end{minipage}
% \hspace{.05\linewidth}
% \hfill
\end{figure}
\begin{figure}
% \begin{minipage}{.4\linewidth}
  \centering
  \includegraphics[width=0.6\linewidth]{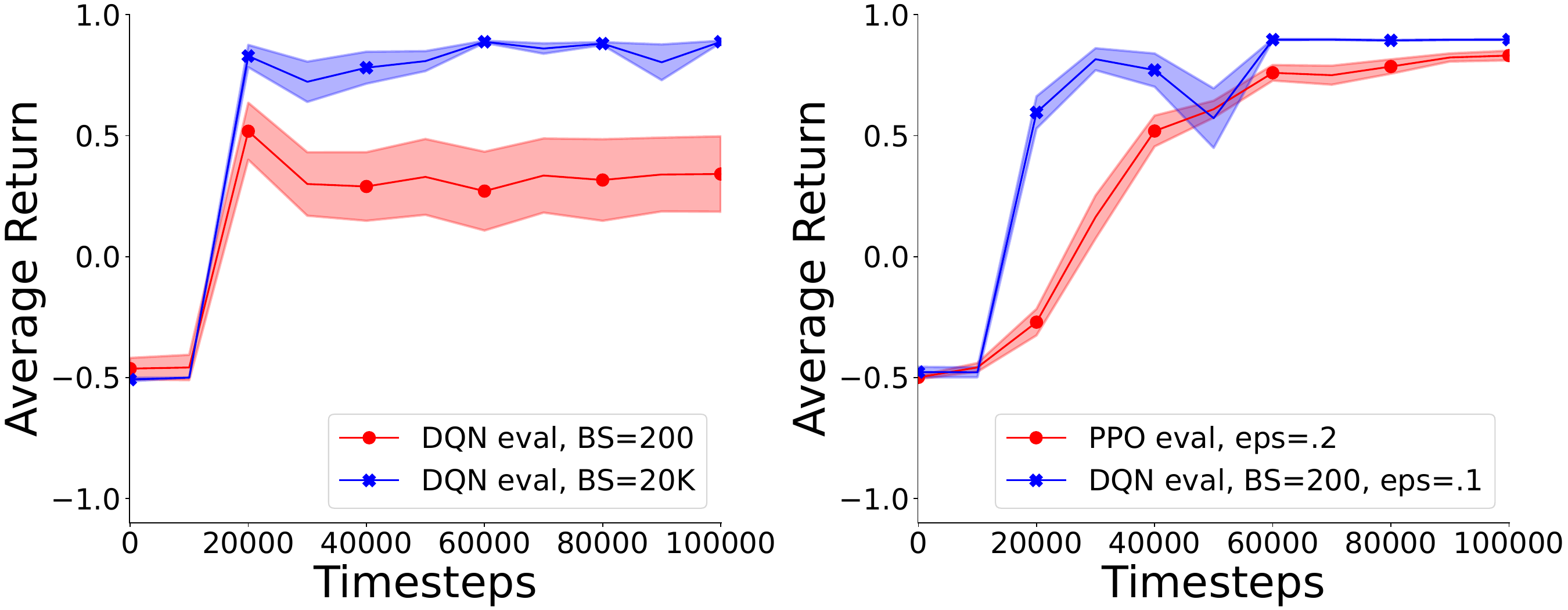}
  \captionof{figure}{Frozen T-Maze. Left: DQN small vs. large buffer sizes. Right: PPO and DQN when adding stochasticity.}
  \label{fig:exploration_buffer}
% \end{minipage}
\vspace{-10pt}
\end{figure}
% \begin{figure}
% % \vspace{-10pt}
%      \centering
%      \begin{subfigure}[b]{0.57\textwidth}
%          \centering
%          \includegraphics[width=\textwidth]{figures/PPOvsDQN.pdf}
%          % \vspace{2pt}
%          \caption{Performance of DQN and PPO in the T-Maze, Key2Door, and Off-Track environments.} 
%      \end{subfigure}
%      \hspace{10pt}
%      \begin{subfigure}[b]{0.39\textwidth}
%          \centering
%          \includegraphics[width=\textwidth]{figures/buffer_exploration_tmaze.pdf}
%          \caption{Learning curves of DQN with small and large buffer sizes. Learning curves of PPO and DQN with small buffer size when increasing the stochasticity of the environment and the policy.}
%      \end{subfigure}
%       % \caption{Frozen T-Maze}
%       \label{fig:key2door_offtrack}
%       \vspace{-30pt}
% \end{figure}
\paragraph{On-policy vs. off-policy}

The results in Figure \ref{fig:DQNvsPPO} consistently reveal a common trend across all three environments. PPO struggles with generalization beyond the agent's preferred trajectories. After an initial phase where the average returns on the training and evaluation environments increase (`PPO train' and `PPO eval'), the return on the evaluation environments (`PPO eval') starts decreasing when the agent commits to a particular trajectory, as a result of policy confounding. 
In contrast, since the training samples come from a mixture of policies, DQN performs optimally in both variants of the environment (`DQN train' and `DQN eval') long after converging to the optimal policy.
% \footnote{The small gap between `DQN train' and `DQN eval' is due to the $-0.1$ penalty per timestep. In all three environments, the shortest path is longer in the evaluation environment than in the training environment.} 
A visualization of the state representations learned with PPO, showing that the policy does ignore necessary variables, is provided in Appendix \ref{ap:learned_representations}.
\paragraph{Large vs. small replay buffers}
We mentioned in Section \ref{sec:what_do} that the effectiveness of off-policy methods against policy confounding depends on the size of the replay buffer. The results in Figure \ref{fig:exploration_buffer} (left) confirm this claim. The plot shows the performance of DQN in the Frozen T-Maze environment when the size of the replay buffer contains $100$K experiences and when it only contains the last $10$K experiences. We see that in the second case, the agents performance in the evaluation environment decreases (red curve left plot). This is because, after the initial exploration phase, the distribution of trajectories becomes too narrow, and the spurious correlations induced by the latest policies dominate the replay buffer. Similar results for the other two environments are provided in Appendix \ref{ap:buffer_exploration}.

% Our empirical results in the next section suggest that when the resulting distribution is too narrow (i.e. data comes from few and very similar policies) the representation fails to generalize when evaluated on different trajectories.

\paragraph{Exploration and domain randomization}
The last experiment shows that if sufficient exploration is allowed, DQN may still generalize to different trajectories, even when using small replay buffers (blue curve right plot on Figure \ref{fig:exploration_buffer}). In the original configuration, the exploration rate $\epsilon$ for DQN starts at $\epsilon = 1$ and decays to $\epsilon = 0.0$ after $20$K steps. For this experiment, we set the final rate $\epsilon = 0.1$. 
In contrast, since exploration in PPO is controlled by the entropy bonus, which makes it hard to ensure fixed exploration rates, we add noise to the environment instead. The red curve in Figure \ref{fig:exploration_buffer} (right) shows that when the agent's actions are overridden by a random action with $20\%$ probability, the performance of PPO in the evaluation environment does not degrade after the agent has converged to the optimal policy. This suggests that the added noise prevents spurious correlations from dominating training batches. However,  it may also happen that random noise is insufficient to remove the spurious correlations, as occurs in the Key2Door environment (Figure \ref{fig:exploration_buffer_keydoor}; Appendix \ref{ap:buffer_exploration}). Similar results for Diversion are provided in Appendix \ref{ap:buffer_exploration}.

\vspace{-5pt}
\section{Conclusion}
This paper described the phenomenon of policy confounding. We demonstrated both theoretically and empirically how, as a result of following certain trajectories, agents may pick up on spurious correlations and develop habits that are not robust to trajectory deviations. We also identified the circumstances under which policy confounding is most likely to occur in practice and suggested a few ad hoc solutions that may mitigate its effects. We view this paper as a stepping stone to exploring more sophisticated solutions. An interesting avenue for future research is the integration of tools from the field of causal inference \citep{hernan2010causal, peters2017elements} to assist the agent in forming state representations grounded in causal relationships rather than mere statistical associations \citep{lu2018deconfounding, zhang2020invariant, sontakke2021causal, saengkyongam2023invariant}.

\section*{Acknowledgements}
% \newlength{\tmplength}
% \setlength{\tmplength}{\columnsep}
%\setlength{\columnsep}{7pt}%
%F.A.O.\ is funded by EPSRC First Grant EP/R001227/1.  
\begin{wrapfigure}{r}{0.28\linewidth}
    \vspace{-18pt}
    \hspace{5pt}
    \includegraphics[width=0.88\linewidth]{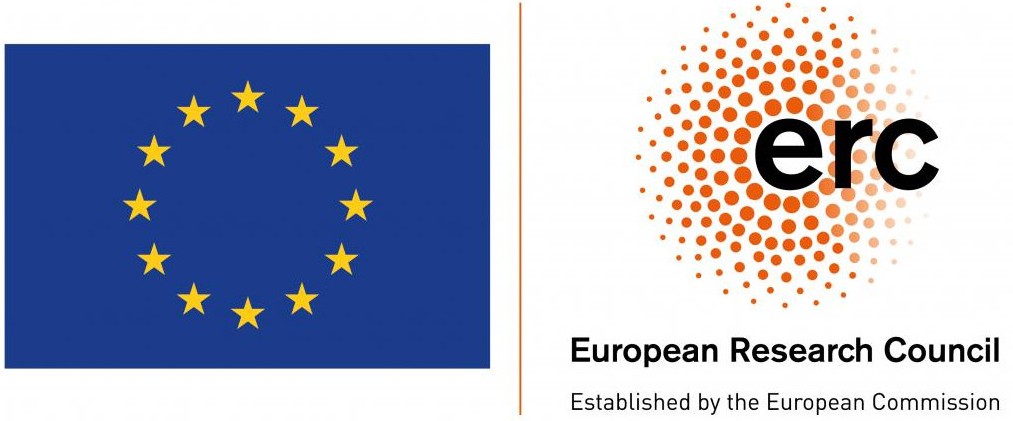}
    \vspace{-20pt}
\end{wrapfigure}
This project received funding from the European Research Council (ERC) 
under the European Union's Horizon 2020  research 
and innovation program (grant agreement No.~758824 \textemdash INFLUENCE).
% {\color{red} FUTURE WORK: You cite causal representation learning as a promising direction for future work, but do not provide specifics on how these techniques could be applied to address policy confounding. Could you expand on how invariant risk minimization or other causal inference tools could help mitigate the effects you demonstrate? Are there any concrete steps you propose for integrating causal representations into solving this problem?}

% \section*{Acknowledgements}
% \newlength{\tmplength}
% \setlength{\tmplength}{\columnsep}
%\setlength{\columnsep}{7pt}%
%F.A.O.\ is funded by EPSRC First Grant EP/R001227/1.  
% \begin{wrapfigure}{r}{0.28\linewidth}
%     \vspace{-18pt}
%     \hspace{5pt}
%     \includegraphics[width=0.88\linewidth]{figures/LOGO_ERC-FLAG_EU-cropped}
%     \vspace{-20pt}
% \end{wrapfigure}
% This project received funding from the European Research Council (ERC) 
% under the European Union's Horizon 2020  research 
% and innovation program (grant agreement No.~758824 \textemdash INFLUENCE)

% \newpage

\bibliography{bibliography}
\bibliographystyle{rlc}

\newpage

\appendix
\onecolumn

\section{Proofs}\label{ap:proofs}

\begin{lemma}
  Let $\mathbf{\Phi^{\pi_1*}}$ be the set of all possible $\pi$-minimal state representations under $\pi_1$, where every $\Phi^{\pi_1*} \in \mathbf{\Phi^{\pi_1*}}$ is defined as $\Phi^{\pi_1*}:  \mathcal{S}^{\pi_1} \to \bar{\mathcal{S}}^{\pi_1*}$ and $\bar{\mathcal{S}}^{\pi_1*} = \times \bar{\mathcal{F}}^{\pi_1 *}$ , and let   
 $\pi_2$ be a second policy such that for all $s_t \in \mathcal{S}^{\pi_1} \cap \mathcal{S}^{\pi_2}$, 
  $$\supp \left(\pi_2(\cdot \mid s_t)\right) \subseteq \supp \left(\pi_1(\cdot \mid s_t)\right).$$ For all $\Phi^{\pi_1*} \in \mathbf{\Phi^{\pi_1*}}$, there exists a $\pi$-Markov state representation under policy $\pi_2$, $\Phi^{\pi_2}: \mathcal{S}^{\pi_2} \to \bar{\mathcal{S}}^{\pi_2}$ with $\bar{\mathcal{S}}^{\pi_2} = \times \bar{\mathcal{F}}^{\pi_2}$, such that $\bar{F}^{\pi_2} \subseteq \bar{F}^{\pi_1*}$ for all $s_t \in \mathcal{S}^{\pi_1} \cap \mathcal{S}^{\pi_2}$. Moreover, there exist cases where $\bar{F}^{\pi_2}_t \neq \bar{F}^{\pi_1*}_t$.
   % the inequality is strict, $k^{{\pi_1}*} > k^{{\pi_2}*}$
  % $\Phi^{\pi_1*} \neq \Phi^{\pi_2*}$.
  \label{lemma:one}
 \end{lemma}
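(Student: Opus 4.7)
The proof splits into two parts: (i) construct, for each $\pi_1$-minimal representation $\Phi^{\pi_1*}$, a $\pi_2$-Markov representation whose variable set is contained in $\bar{\Theta}^{\pi_1*}_t$; and (ii) exhibit an environment where the containment is strict. The key enabling observation is that the support-containment hypothesis forces $H^{\pi_2}_t \subseteq H^{\pi_1}_t$, which I would prove by induction on $t$. The base case is immediate since the initial observation distribution does not depend on the policy. For the inductive step, if $h_t \in H^{\pi_2}_t$ then by hypothesis $h_t \in H^{\pi_1}_t$, so the support condition gives $\supp(\pi_2(\cdot\mid h_t)) \subseteq \supp(\pi_1(\cdot\mid h_t))$; any action $a_t$ taken by $\pi_2$ therefore has positive probability under $\pi_1$ as well, so the successor $(h_t, a_t, o_{t+1})$ belongs to $H^{\pi_1}_{t+1}$ whenever it belongs to $H^{\pi_2}_{t+1}$. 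Consequently $H^{\pi_1}_t \cap H^{\pi_2}_t = H^{\pi_2}_t$.

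For (i), I would define $\Phi^{\pi_2}$ as the restriction of $\Phi^{\pi_1*}$ to $H^{\pi_2}_t$, which trivially uses the same variables, giving $\bar{\Theta}^{\pi_2}_t = \bar{\Theta}^{\pi_1*}_t$. The reward requirement of Definition~\ref{def:pi_markov_history} is inherited directly: since $\{h_t\}^{\Phi}_{\pi_2} \subseteq \{h_t\}^{\Phi}_{\pi_1}$, and $R_h$ is already constant on the larger class by $\pi_1$-Markov-ness, it remains constant on the restriction. The delicate step is the transition requirement. Fix $h_t \in H^{\pi_2}_t$ and $a_t \in \supp(\pi_2(\cdot\mid h_t))$. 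Any $h'_{t+1}$ with $T_h(h'_{t+1}\mid h_t, a_t) > 0$ extends $h_t$ by $a_t$ and some $o_{t+1}$, and since $h_t$ is $\pi_2$-reachable and $a_t$ is in $\pi_2$'s support, such $h'_{t+1}$ automatically lies in $H^{\pi_2}_{t+1}$. Hence $\sum_{h'_{t+1} \in \{h_{t+1}\}^{\Phi}_{\pi_2}} T_h(h'_{t+1}\mid h_t, a_t)$ equals the corresponding sum over $\{h_{t+1}\}^{\Phi}_{\pi_1}$, which by $\pi_1$-Markov-ness of $\Phi^{\pi_1*}$ is a function only of $\Phi^{\pi_1*}(h_t) = \Phi^{\pi_2}(h_t)$. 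Averaging this constant against $\Pr^{\pi_2}(\cdot\mid \Phi^{\pi_2}(h_t))$ leaves it unchanged, producing the required identity $\sum T_h(\cdot) = \Pr^{\pi_2}(\Phi^{\pi_2}(h_{t+1})\mid \Phi^{\pi_2}(h_t), a_t)$.

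For (ii), I would invoke Example~\ref{ex:frozen} (Frozen T-Maze). Taking $\pi_1$ to be any policy with full action support at every history, so that the $\pi_1$-minimal representation coincides with the ordinary minimal representation, and $\pi_2$ to be the optimal policy, the analysis in Example~\ref{ex:frozen} already establishes that $\Phi^{\pi_2}(h_t) = x^2_t$ is $\pi_2$-Markov from $t = 1$ onward, while $\Phi^{\pi_1*}(h_t)$ must additionally retain the initial signal $x^1_0$ to determine the reward of the two goals. This yields $\bar{\Theta}^{\pi_2}_t \subsetneq \bar{\Theta}^{\pi_1*}_t$, settling the moreover clause.

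The main obstacle is the transition verification: one must be careful that restricting the equivalence class from $\pi_1$ to $\pi_2$ does not change the value of $\sum_{h'_{t+1}} T_h(h'_{t+1}\mid h_t, a_t)$. The argument above shows this via two steps that must be executed in order — first, that $\pi_2$-reachability of $h_t$ together with $a_t$ being in $\pi_2$'s support pushes every nonzero-transition successor into $H^{\pi_2}$, so the sum is unaffected by which equivalence class we index over; second, that $\pi_1$-Markov-ness then lets us treat this sum as a function of the class label alone, which is exactly what is needed to finish.
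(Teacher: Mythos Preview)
Your proposal is correct and follows essentially the same route as the paper: establish $H^{\pi_2}\subseteq H^{\pi_1}$ from the support hypothesis, restrict $\Phi^{\pi_1*}$ to $H^{\pi_2}$ to obtain a $\pi_2$-Markov representation with the same (hence $\subseteq$) variable set, and invoke the Frozen T-Maze example for the strict case. Your treatment is in fact more careful than the paper's: you give an explicit induction for the history-set inclusion under the weaker intersection hypothesis and you verify the reward and transition conditions of Definition~\ref{def:pi_markov_history} for the restricted representation, whereas the paper simply asserts that $H^{\pi_2}\subseteq H^{\pi_1}$ implies the existence of a $\Phi^{\pi_2}$ with $\bar{\Theta}^{\pi_2}_t\subseteq\bar{\Theta}^{\pi_1*}_t$.
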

\begin{proof}
First, it is easy to show that 
$$ \forall s_t \in \mathcal{S}, \supp \left(\pi_2( \cdot \mid s_t)\right) \subseteq \supp \left(\pi_1( \cdot \mid s_t)\right) \iff  \mathcal{S}^{\pi_2} \subseteq \mathcal{S}^{\pi_1},$$
and
$$ \forall s_t \in \mathcal{S}, \supp \left(\pi_2(\cdot \mid s_t)\right) = \supp \left(\pi_1(\cdot \mid s_t)\right) \iff  \mathcal{S}^{\pi_2} = \mathcal{S}^{\pi_1}.$$

In particular, $\mathcal{S}^{\pi_2} \subset \mathcal{S}^{\pi_1}$ if there is  at least one state $s'_t \in \mathcal{S}^{\pi_1} \cap \mathcal{S}^{\pi_2}$ such that
$$\supp \left(\pi_2(\cdot \mid s'_t)\right) \subset \supp \left(\pi_1(\cdot \mid s'_t)\right)$$
while 
$$\supp \left(\pi_2(\cdot \mid s_t)\right) = \supp \left(\pi_1(\cdot \mid s_t)\right)$$ 
for all other $s_t \in \mathcal{S}^{\pi_1} \cap \mathcal{S}^{\pi_2}$. 

In such cases, we know that there is at least one action $a'$ for which $\pi_2(a'_t \mid s'_t) = 0$ but $\pi_1(a'_t \mid s'_t) \neq 0$. Hence, if there was a state (or group of states) that could only be reached by taking action $a'_t$ at $s'_t$, $\pi_2$ would never visit it and thus $\mathcal{S}^{\pi_2} \subset \mathcal{S}^{\pi_1}$.

Further,  if $\mathcal{S}^{\pi_2} \subset \mathcal{S}^{\pi_1}$, we know that, for every $\Phi^{\pi_1*} \in \mathbf{\Phi^{\pi_1*}}$, there must be a  $\Phi^{\pi_2*}$ that requires, at most, the same number of variables,  $\bar{F}_t^{\pi_2} \subseteq \bar{F}_t^{\pi_1 *}$
% \begin{equation*}
%     H^{\pi_1} \subset H^{\pi_2} \Rightarrow  \bar{F}_t^{\pi_2} \subseteq \bar{F}_t^{\pi_1 *}
% \end{equation*}
and, in some cases,  fewer, $\bar{F}_t^{\pi_1 *} \neq \bar{F}_t^{\pi_2 *}$ (e.g., Frozen T-Maze example).

% Further, the opposite is also true, $$H^{\pi_1} \subset H^{\pi_2} \iff \supp \left(\pi_1( \cdot \mid h'_t)\right) \subset \supp \left(\pi_2( \cdot \mid h'_t)\right)$$ for at least one $h'_t \in H^{\pi_1} \cap H^{\pi_2}$, while 

% \begin{equation*}
    % \supp \left(\pi_1( \cdot \mid h_t)\right) \subset \supp \left(\pi_2( \cdot \mid h_t)\right) \iff H^{\pi_1} \subset H^{\pi_2}
% \end{equation*}

% and 
% $$\supp \left(\pi_1(\cdot \mid h_t)\right) = \supp \left(\pi_2(\cdot \mid h_t)\right) \iff  H^{\pi_1} = H^{\pi_2}.$$
\end{proof}

\propositionone*

\begin{proof}
The proof follows from Lemma \ref{lemma:one}. We know that, in general, $\mathcal{S}^\pi \subseteq \mathcal{S}$, and if $\pi(a'_t| s'_t) = 0$ for at least one pair $a'_t \in \mathcal{A}, s'_t \in \mathcal{S}$ for which there is a state (or group of states) that can only be reached by taking action $a'_t$ at $s'_t$, then $\mathcal{S}^\pi \subset \mathcal{S}$. Hence, for every $\Phi^*$ there is a $\Phi^\pi$ such that $\bar{F}^\pi \subseteq \bar{F}^*$, and in some cases, we may have $\bar{F}^\pi \subset \bar{F}^*$ (e.g., Frozen T-Maze example).

\end{proof}

\policyconfounding*
\begin{proof}
    Proof by contradiction. Let us assume that $\bar{F}^\pi \subset \bar{F}^*$, and yet there is no policy confounding. I.e., for all $s_t, s_{t+1}  \in \mathcal{S}$, $a_t \in \mathcal{A}$,
    \begin{equation}
        R^\pi(\Phi^\pi(s_t), a_t)=  R^\pi(\dooperator(\Phi^\pi(s_t)), a_t)
        \label{eq:no_confounding1}
    \end{equation}
    and
    \begin{align}
    \begin{split}
        \Pr^\pi(\Phi^\pi(s_{t+1}) \mid \Phi^\pi(s_t), a_t) &= \Pr^\pi(\Phi^\pi(s_{t+1}) \mid \dooperator(\Phi^\pi(s_t)), a_t)
        % \label{eq:no_confounding2}
    \end{split}
    \end{align}
    % Since $\Phi^\pi$ is a $\pi$-Markov state representation, from Definition 
    % \ref{def:pi_markov_state} we know that, for all $h_t \in H^\pi$,
    First, note that the do-operator implies that the equality  must hold for \emph{all} $s'_t$ in the equivalence of $s_t$ class under $\Phi^\pi$, $s'_t \in \{s_t\}^{\Phi^\pi} = \{s'_t \in \mathcal{S}: \Phi(s'_t) = \Phi(s_t)\}$, i.e., not just those $s'_t$ that are visited under $\pi$, 
    \begin{equation}
         R^\pi(\Phi^\pi(s_t), a_t) = R^\pi(\dooperator(\Phi^\pi(s_t)), a_t) =  R(s'_t, a_t) \quad \text{for all} \quad s'_t \in \{s_t\}^\Phi
    \end{equation}
    which is precisely the first condition in Definition \ref{def:markov_state},
    \begin{equation}
        R(s_t, a_t) = R(\Phi^\pi(s_t), a_t) 
         \label{eq:dooperator_ex}
    \end{equation}
    for all $s_t \in \mathcal{S}$ and $a_t \in \mathcal{A}$.
    
    Analogously, we have that,
    \begin{align}
    \begin{split}
        \Pr^\pi(\Phi^\pi(s_{t+1}) \mid \Phi^\pi(s_t), a_t) &= \Pr^\pi(\Phi^\pi(s_{t+1}) \mid \dooperator(\Phi^\pi(s_t)), a_t)\\
        &= \Pr(\Phi^\pi(s_{t+1}) \mid \Phi^\pi(s_t), a_t)
        \label{eq:no_confounding2}
    \end{split}
    \end{align}
    where the second equality reflects that the above must hold independently of $\pi$. Hence, we have that for all $s_t, s_{t+1} \in \mathcal{S}$ and $s'_t \in \{s_t\}^\Phi$,
    % Moreover, since $\Phi^\pi$ is $\pi$-Markov, from Definition \ref{def:pi_markov_state} we know that, for all $h_t, h_{t+1} \in H^\pi$,  and all $a_t \in \supp(\pi(\cdot \mid h_t))$,
    % \begin{align}
    % \begin{split}
    %      \sum_{h'_{t+1} \in \{h_{t+1}\}^{\Phi^\pi}} T_h(h'_{t+1} \mid h_t, a_t) = \Pr(\Phi^\pi(h_{t+1}) \mid \Phi^\pi(h_t), a_t) 
    %      \label{eq:dooperator_prob}
    % \end{split}
    % \end{align}
    \begin{align}
    \begin{split}
        \Pr(\Phi^\pi(s_{t+1}) \mid \Phi^\pi(s_t), a_t)
        &=  \Pr(\Phi^\pi(s_{t+1}) \mid \Phi^\pi(s'_t), a_t),
    \end{split}
    \end{align}
    which means that, for all $s_t, s_{t+1} \in \mathcal{S}$ and $s_t \in \mathcal{A}$,
    \begin{align}
    \begin{split}
         \Pr(\Phi^\pi(s_{t+1}) \mid \Phi^\pi(s_t), a_t) &= \Pr(\Phi^\pi(s_{t+1}) \mid s_t, a_t) \\ 
         &=\sum_{s'_{t+1} \in \{s_{t+1}\}^{\Phi^\pi}} T(s'_{t+1} \mid s_t, a_t),
         \label{eq:dooperator_prob}
    \end{split}
    \end{align}
    which is the second condition in Definition \ref{def:markov_state}.
    
    Equations \eqref{eq:dooperator_ex} and \eqref{eq:dooperator_prob} reveal that if the assumption is true (i.e., $\Phi^\pi$ is not confounded by the policy), then $\Phi^\pi$ is not just $\pi$-Markov but actually strictly Markov (Definition \ref{def:markov_state}). However, we know that $\Phi^*(s_t)$ is the minimal state representation, which contradicts the above statement, since, according to Definition \ref{def:minimal_state}, there is no proper subset of $\bar{F}^*$, for all $s_t \in \mathcal{S}$, such that the representation remains Markov. Hence, $\bar{F}^\pi \subset \bar{F}^*$ implies policy confounding.
\end{proof}
\propositiontwo*
\begin{proofsketch} Consider a deterministic MDP with a deterministic policy. Imagine there exists a variable $X$ that is perfectly correlated with the episode's timestep $t$, but that is generally irrelevant to the task. The variable $X$ would constitute in itself a valid $\pi$-Markov state representation since it can be used to determine transitions and rewards so long as a deterministic policy is followed. At the same time, $X$ would not enter the minimal Markov state representation because it is useless under stochastic policies. Example \ref{ex:watch_time} below illustrates this situation.
\end{proofsketch} 

\section{Example: Watch the Time}\label{ap:watch_example}

\begin{figure}[h!]
\centering
\includegraphics[width=0.38\textwidth]{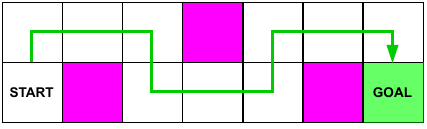}
\caption{An illustration of the watch-the-time environment.} 
\label{fig:watch_time}
\end{figure}
\begin{figure*}[h!]
\centering
\includegraphics[width=\textwidth]{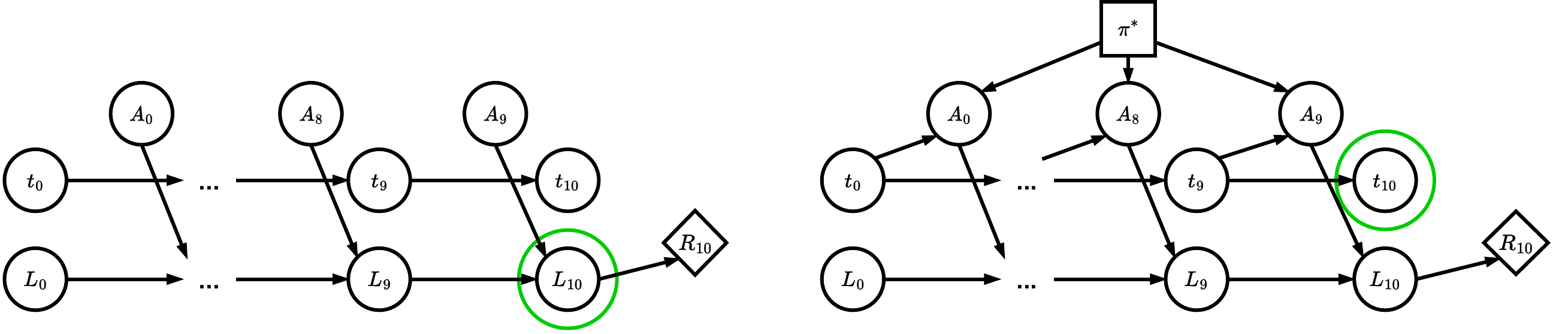}
\caption{Two DBNs representing the dynamics of the watch-the-time environment, when actions are sampled at random (left), and when they are determined by the optimal policy (right).} 
\label{fig:dbn_wathctime}
\end{figure*}
\begin{example}
\textbf{(Watch the Time)} This example is inspired by the empirical results of \citet{Song2020Observational}. Figure \ref{fig:watch_time} shows a grid world environment. The agent must go from the start cell to the goal cell. The agent must avoid the pink cells; stepping on those yields a $-0.1$ penalty. There is a $+1$ reward for reaching the goal. The agent can observe its own location within the maze $L$ and the current timestep $t$. The two diagrams in Figure \ref{fig:dbn_wathctime}
 are DBNs describing the environment dynamics. When actions are considered exogenous random variables (left diagram), the only way to estimate the reward at $t=10$ is by looking at the agent's location $L_{10}$. In contrast, when actions are determined by the policy (right diagram), $t$ becomes a proxy for the agent's location. This is because the start location and the sequence of actions are fixed. This implies that $t$ is a perfectly valid $\pi$-Markov state representation under $\pi^*$. Moreover, as shown by the DBN on the right, the optimal policy may simply rely on $t$ to determine the optimal action.
 \label{ex:watch_time}
 \end{example}
\section{Further Related Work}\label{ap:related_work}

\paragraph{Early evidence of policy confounding}
Although to the best of our knowledge, we are the first to bring forward and describe mathematically the idea of policy confounding, a few prior works have reported evidence of particular forms of policy confounding. In their review of the Arcade Learning Environment (ALE; \citealp{bellemare13arcade}), \citet{machado2018revisiting} explain that because the games are fully deterministic (i.e., initial states are fixed and transitions are deterministic), open-loop policies that memorize good action sequences can achieve high scores in ALE. Clearly, this can only occur if the policies themselves are also deterministic. In such cases, policies, acting as confounders, induce a spurious correlation between the past action sequences and the environment states. Similarly, \citet{Song2020Observational} show, by means of saliency maps, how agents may learn to use irrelevant features of the environment that happen to be correlated with the agent's progress, such as background clouds or the game timer, as clues for outputting optimal actions. In this case, the policy is again a confounder for all these, a priori irrelevant, features. \citet{zhang2018study} provide empirical results showing how large neural networks may overfit their training environments and, even when trained on a collection of procedurally generated environments, memorize the optimal action for each observation. \citet{zhang2018dissection} show how, when trained on a small subset of trajectories, agents fail to generalize to a set of test trajectories generated by the same simulator. \citet{ostrovski2021difficulty} empirically show that agents passively trained on observational data generated by other agents tend to perform poorly due to extrapolation errors caused by some of the state-action pairs being underrepresented in the data. \citet{lan2023can} report evidence of well-trained agents failing to perform well on Mujoco environments when starting from trajectories (states) that are out of the distribution induced by the agent's policy. We conceive this as a simple form of policy confounding. Since the Mujoco environments are also deterministic, agents following a fixed policy can memorize the best actions to take for each state instantiation, potentially relying on superfluous features. Hence, they can overfit to unnatural postures that would not occur under different policies. Finally, \citet{nikishin2022primacy} describe a phenomenon named `primacy bias', which prevents agents trained on poor trajectories from further improving their policies. The authors show that this issue is particularly relevant when training relies heavily on early data coming from a fixed random policy. We hypothesize that one of the causes for this is also policy confounding. The random policy may induce spurious correlations that lead to the formation of rigid state (state) representations that are hard to recover from. 
% For instance, in the Frozen T-Maze example, with uniform random policies, agents take, on average, 50 timesteps to reach one of the two goals. If the agent's memory is limited to the last 10 timesteps, there may be too few instances in which the  
% We note that in one way or another, all these works have already exposed similar issues to those described here. Our contribution, however, is the characterization of the phenomenon of policy confounding, which subsumes all these works under the same umbrella. 
\paragraph{Generalization}
Generalization is a hot topic in machine learning. The promise of a model performing well in contexts other than those encountered during training is undoubtedly appealing. In the realm of  reinforcement learning, the majority of research focuses on generalization to 
environments that, despite sharing a similar structure, differ somewhat from the training environment
\citep{kirk2023survey}. These differences range from small variations in the transition dynamics (e.g., sim-to-real transfer; \citealp{higgins2017darla, tobin2017domain, peng2018sim, zhao2020sim}), changes in the observations (i.e., modifying irrelevant information, such as noise: \citealp{mandlekar2017adversarially, ornia2022observational}, or background variables: \citealp{zhang2020invariant, stone2021distracting}), to alterations in the reward function, resulting in different goals or tasks \citep{taylor2009transfer,lazaric2012transfer,muller2021procedural}. Instead, we address the problem of OOT generalization, where the objective is to generalize to different trajectories within the same environment. 
\begin{example} \label{ex:OODvsOOT}
To illustrate the difference between OOD generalization and OOT generalization, let us consider a robot trained via RL to go from our office to the coffee machine, get coffee, and come back, as well as from our office to the printer, make copies, and come back. There are two possible routes to the coffee machine: either through the printer room or through a corridor directly leading to the coffee machine. The path through the printer room is longer, so the robot typically avoids it when coffee is ordered. However, one day, when we order coffee and the corridor is blocked, the robot attempts to go through the printer room and returns with a copy of a new paper titled `Bad Habits' instead of the coffee. This serves as an example of out-of-trajectory generalization. Since the robot is accustomed to obtaining copies in the copy room, it disregards the coffee order. An example of the more general problem of OOD generalization could involve instructing the robot to navigate the office when the floor is wet or to fetch something different, like a glass of water. The crucial distinction is that, in these last two examples, the states the robot visits or the rewards it receives differ. The robot has not been trained on a wet floor, and it has never retrieved a glass of water before. However, in the first example, we would expect the robot to recognize that being in the copy room does not necessarily imply getting copies. To be fair, the blocked corridor represents a change in the environment; nevertheless, this change is intended to prompt the agent to choose an alternative path. It is worth noting that this alternative path was also feasible in the original environment.
\end{example}
% \citet{di2022goal} study the problem of `goal missgeneralization', the paper shows how agents may exploit spurious correlations (referred to as proxies in the paper) between observations and rewards, which exist only in the training environments, and fail to generalize to test environments. Hence, 

% Typically, the idea is to randomize some of the simulator's settings such that agents learn policies that are robust to changes in the transition dynamics and thus can generalize when deployed in the real world. 

% CONTINUE

% Cobbe et al., 2020

% https://arxiv.org/pdf/1703.02660.pdf

% https://arxiv.org/pdf/1703.02702.pdf

% https://arxiv.org/pdf/1905.12654.pdf

% opposed to other papers that explore how to effect of spurious correlations in the environment we show
% how the policy itself may introduce correlations that are not invariant across policies

\paragraph{State abstraction}
State abstraction is concerned with removing from the representation all that state information that is irrelevant to the task. In contrast, we are worried about learning representations containing too little information, which can lead to state aliasing. Nonetheless, as argued by \citet{McCallum95PhD}, state abstraction and state aliasing are two sides of the same coin. That is why we borrowed the mathematical frameworks of state abstraction to describe the phenomenon of policy confounding. \citet{Li06ISAIM} provide a taxonomy of the types of state abstraction and how they relate to one another. \citet{Givan03AIJ} introduce the concept of bisimulation, which is equivalent to our definition of Markov state representation (Definition \ref{def:markov_state}). \citet{Ferns2006Methods} propose a method for measuring the similarity between two states. \citet{castro2020scalable} notes that this metric is prohibitively expensive and suggests using a relaxed version that computes state similarity relative to a given policy. This is similar to our notion of $\pi$-Markov state representation (Definition \ref{def:pi_markov_state}).  While the end goal of this metric is to group together states that are similar under a given policy, here we argue that this may lead to poor OOT generalization.

\section{Dynamic Bayesian Networks}\label{ap:dbns}
Figures \ref{fig:dbn_key2door} and \ref{fig:dbn_offtrack} show the DBNs for the Key2Door and Diversion environments, respectively.
\begin{figure*}[h!]
\centering
\includegraphics[width=\textwidth]{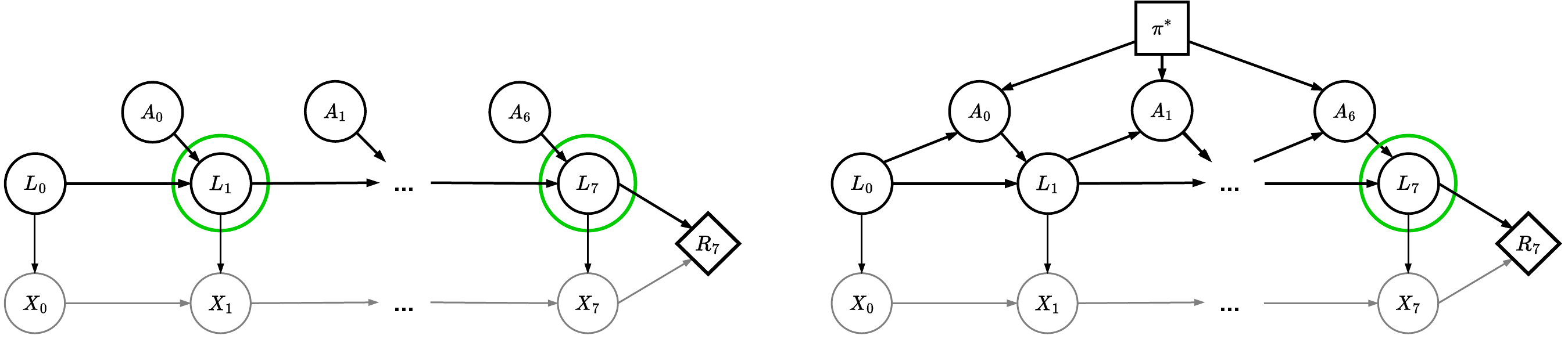}
\caption{Two DBNs representing the dynamics of the Key2Door environment, when actions are sampled at random (left), and when they are determined by the optimal policy (right). The nodes labeled as $L$ represent the agent's location, while the nodes labeled as $X$ represent whether or not the key has been collected. The agent can only see $L$. Hence, when actions that are sampled are random (left), the agent must remember its past locations to determine the reward $R_7$. Note that only $L_1$ and $L_7$ are highlighted in the left DBN. However, other variables in $\langle L_2,\dots, L_6\rangle$ might be needed, depending on when the key is collected. In contrast, when following the optimal policy, only $L_7$ is needed. In this second case, knowing the location is sufficient to determine whether the key has been collected.} 
\label{fig:dbn_key2door}
\end{figure*}
\begin{figure*}[h!]
\centering
\includegraphics[width=\textwidth]{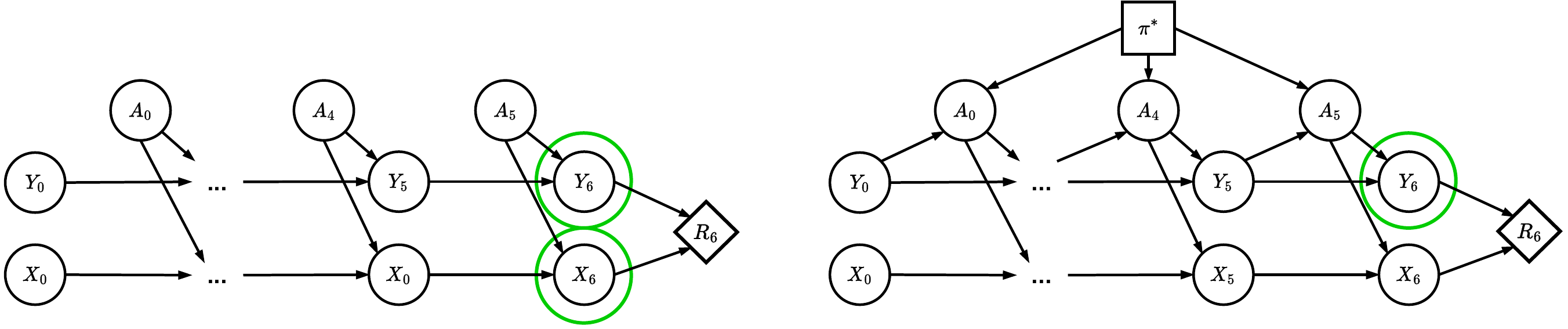}
\caption{Two DBNs representing the dynamics of the Diversion environment, when actions are sampled at random (left), and when they are determined by the optimal policy (right). The nodes labeled as $X$ indicate the row where the agent is located; the nodes labeled as $Y$ indicate the column. We see that when actions are sampled at random, both $X_6$ and $Y_6$ are necessary to determine $R_6$. However, when actions are determined by the optimal policy, $Y_6$ is sufficient, as the agent always stays at the top row.} 
\label{fig:dbn_offtrack}
\end{figure*}

\section{Experimental Results}\label{ap:results}

\subsection{Learned state representations}\label{ap:learned_representations}
The results reported in Section \ref{sec:experiments} show that the OOT generalization problem exists. 
% Still, the skeptical reader may conclude that the evidence provided is insufficient to claim that this is due to policy confounding. 
However, some may still wonder if the underlying reason is truly policy confounding.
To confirm this, we compare the outputs of the policy at every state in the Frozen T-Maze when being fed the same states (observation stack) but two different signals. That is, we permute the variable containing the signal ($X$ in the diagram of Figure \ref{fig:tmaze_dbn}) and leave the rest of the variables in the observation stack unchanged. We then feed the two versions to the policy network and measure the KL divergence between the two output probabilities. This metric is a proxy for how much the agent attends to the signal in every state. The heatmaps in Figure \ref{fig:sensitivity} show the KL divergences at various points during training (0, 10K, 30K, and 100K timesteps) when the true signal is `green' and we replace it with `purple'. We omit the two goal states since no actions are taken there. We see that initially (top left heatmap), the signal has very little influence on the policy (note the scale of the colormap is $10^{-6}$), after 10K steps, the agent learns that the signal is very important when at the top right state (top right heatmap). After this, we start seeing how the influence of the signal at the top right state becomes less strong (bottom left heatmap) until it eventually disappears (bottom right heatmap). In contrast, the influence of the signal at the initial state becomes more and more important, indicating that after taking the first action, the agent ignores the signal and only attends to its own location. The results for the alternative case, `purple' signal being replaced by `green' signal, are shown in Figure \ref{fig:sensitivity_1}.
\begin{figure}[ht]
\centering
\includegraphics[width=0.9\textwidth]{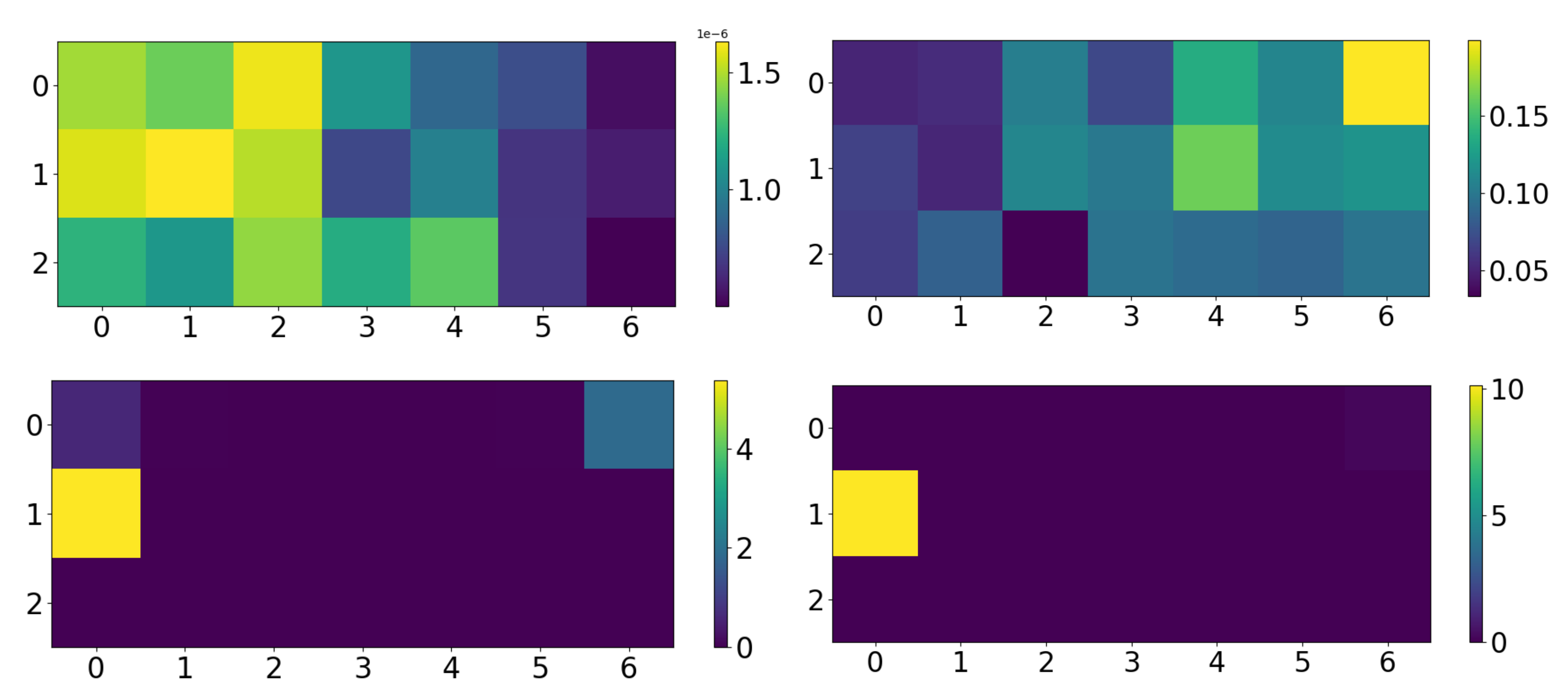}
\caption{A visualization of the learned state representations. The heatmaps show the KL divergence between the action probabilities when feeding the policy network a stack of the past 10 observations and when feeding the same stack but with the value of the signal being switched from green to purple, after  0 (top left), 10K (top right), 30K (bottom left), and 100K (bottom right)  timesteps of training.
} 
\label{fig:sensitivity}
\end{figure}
\begin{figure}[ht]
\centering
\includegraphics[width=0.9\textwidth]{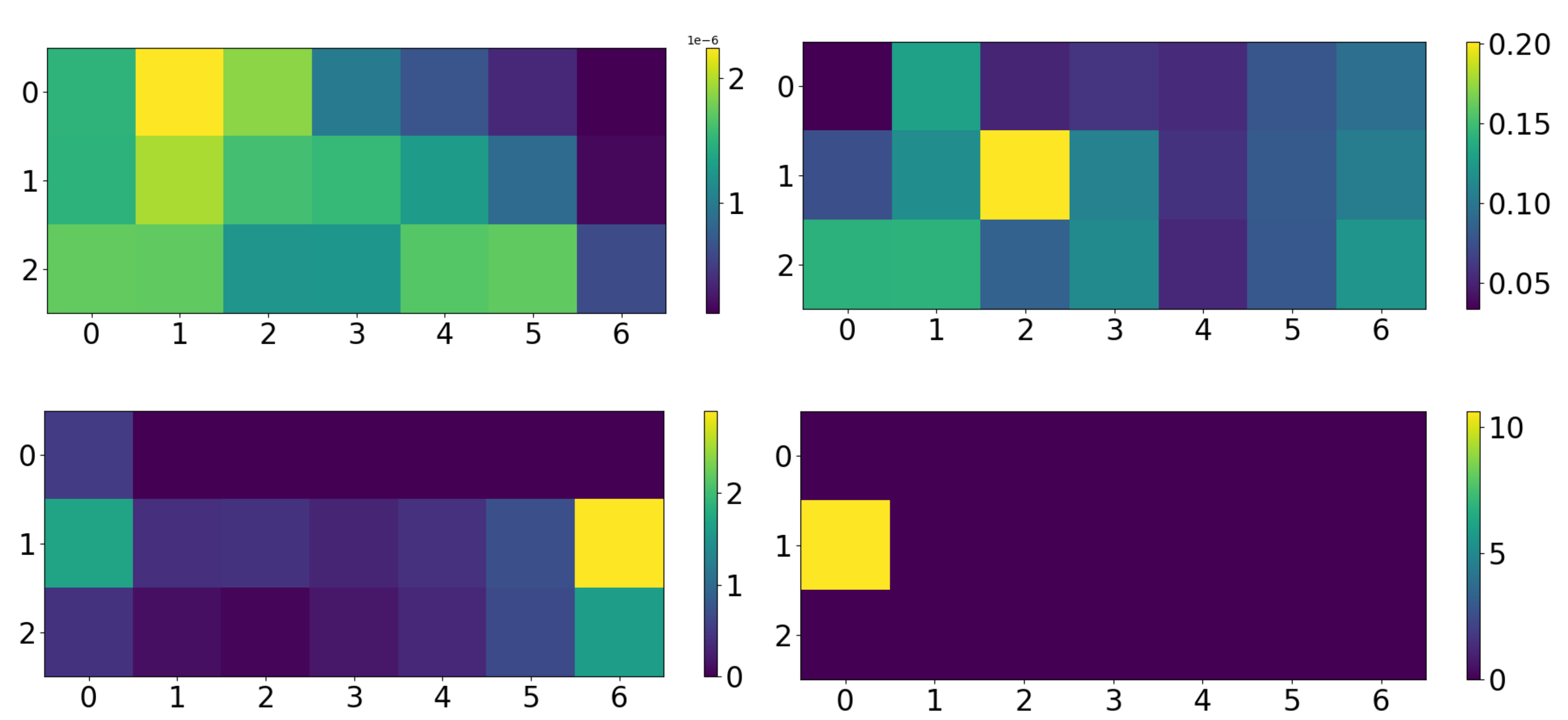}
\caption{A visualization of the learned state representations. The heatmaps show the KL divergence between the action probabilities when feeding the policy network a stack of the past 10 observations and when feeding the same stack but with the value of the signal being switched from purple to green, after  0 (top left), 10K (top right), 30K (bottom left), and 100K (bottom right)  timesteps of training.
} 
\label{fig:sensitivity_1}
\end{figure}

\subsection{Buffer size and exploration/domain randomization}\label{ap:buffer_exploration}
Figures \ref{fig:exploration_buffer_keydoor} and \ref{fig:exploration_buffer_offtrack} report the results of the experiments described in Section \ref{sec:experiments} (paragraphs 2 and 3) for Key2Door and Diversion. We see how the buffer size also affects the performance of DQN in the two environments (left plots). We also see that exploration/domain randomization does improve OOT generalization in Diversion but not in Key2Door.
\begin{figure}[h!]
\centering
\includegraphics[width=0.7\textwidth]{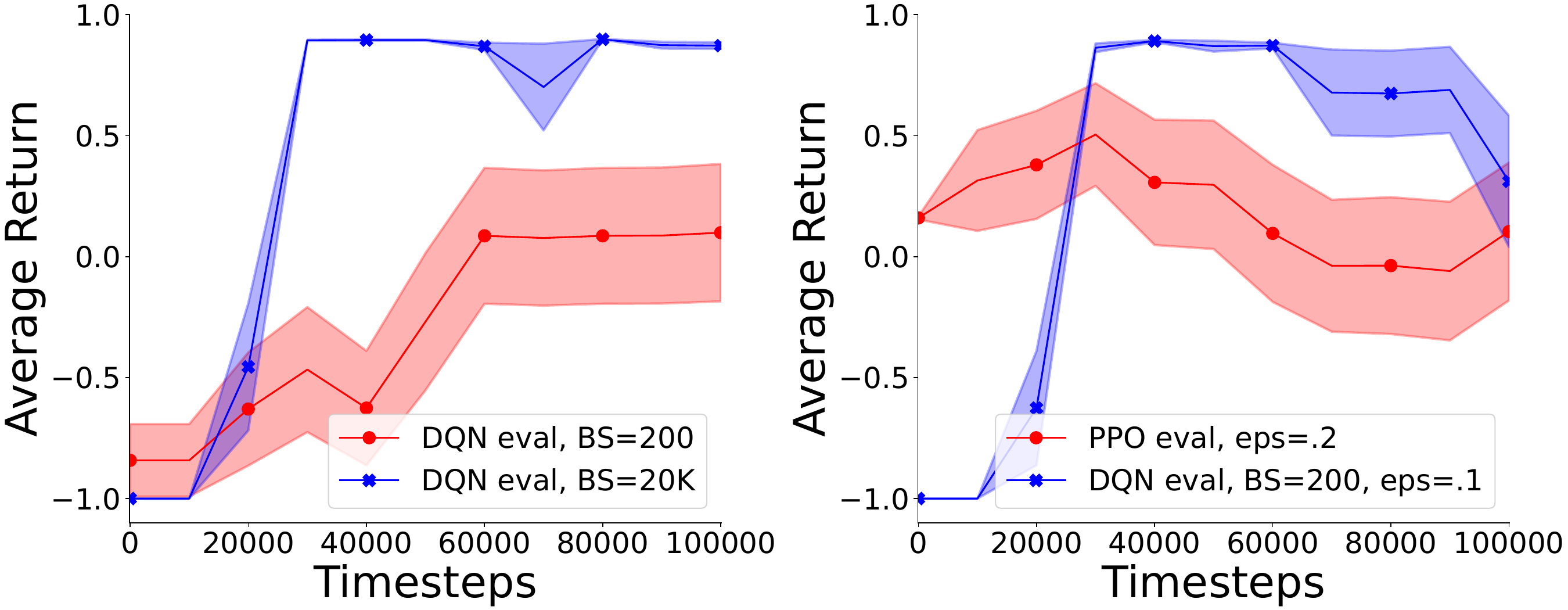}
\caption{Key2Door. Left: DQN small vs. large buffer sizes. Right: PPO and DQN when adding stochasticity.} 
\label{fig:exploration_buffer_keydoor}
\end{figure}

\begin{figure}[h!]
\centering
\includegraphics[width=0.7\textwidth]{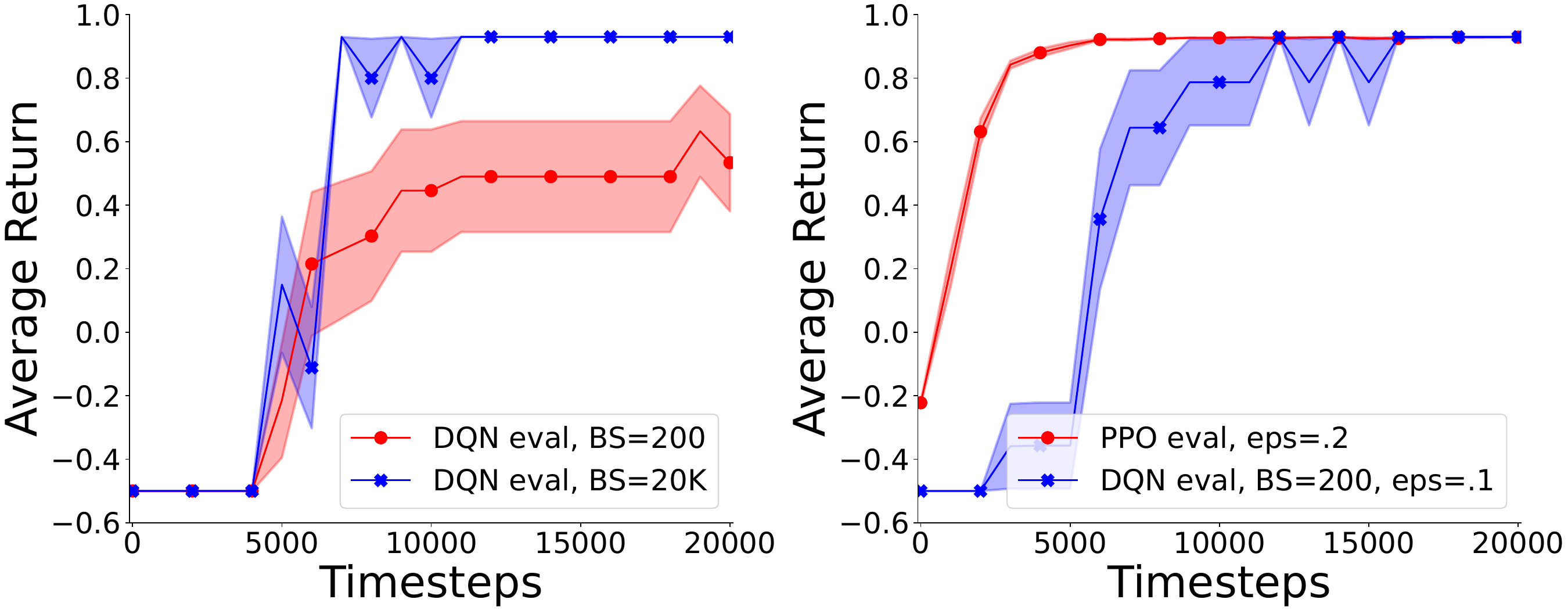}
\caption{Diversion. Left: DQN small vs. large buffer sizes. Right: PPO and DQN when adding stochasticity.}
\label{fig:exploration_buffer_offtrack}
\end{figure}
\newpage
\section{Further Experimental Details}\label{ap:experimental_details}
We ran our experiments on an Intel i7-8650U CPU with 8 cores. Agents were trained with Stable Baselines3 \citep{stable-baselines3}. Most hyperparameters were set to their default values except for the ones reported in Tables \ref{tab:PPO_hyperparameters} (PPO) and \ref{tab:DQN_hyperparameters} (DQN), which worked better than the default values for these particular environments.
\begin{center}
\vspace{-10pt}
  \begin{table}[h]
  \centering
  \caption{PPO hyperparameters.}
  \vspace{5pt}
\resizebox{0.7\textwidth}{!}{
  \begin{tabular}{ p{5cm}|p{4cm}}
    % \hline
% \multicolumn{2}{|c|}{PPO} \\
% \hline
%  \hline
 Rollout steps & 128 \\
 Batch size & 32 \\
 Learning rate & 2.5e-4 \\
 Number epoch & 3 \\
 Entropy coefficient & 1.0e-2 \\
 Clip  range & 0.1 \\
 Value coefficient & 1 \\
 Number Neurons 1st layer & 128 \\
 Number Neurons 2nd layer & 128 \\
% \hline
\end{tabular}
}
\vspace{-20pt}
\label{tab:PPO_hyperparameters}
\end{table}
\end{center}

\begin{center}
\vspace{-10pt}
  \begin{table}[h]
  \centering
  \caption{DQN hyperparameters.}
  \vspace{5pt}
\resizebox{0.7\textwidth}{!}{
  \begin{tabular}{ p{5cm}|p{4cm}}
    % \hline
% \multicolumn{2}{|c|}{PPO} \\
% \hline
%  \hline
 Buffer size & 1.0e5 \\
 Learning starts & 1.0e3 \\
 Learning rate & 2.5e-4 \\
 Batch size & 256 \\
 Initial exploration bonus  & 1.0 \\
 Final exploration bonus & 0.0 \\
 Exploration fraction & 0.2 \\
 Training frequency & 5 \\
 Number Neurons 1st layer & 128\\
 Number Neurons 2nd layer & 128\\
% \hline
\end{tabular}
}
\vspace{-20pt}
\label{tab:DQN_hyperparameters}
\end{table}
\end{center}

\end{document}